\documentclass{article}
\pdfoutput=1
\usepackage{arxiv}
\usepackage{cite}

\usepackage{amsthm}
\newtheorem{theorem}{Theorem}[section]
\newtheorem{lemma}{Lemma}[section]

\newtheorem{remark}{Remark}[section]

\newcommand{\insertkeywords}{\keywords{
Full Approximation Scheme \and FAS \and Multigrid \and Multilevel Methods \and DNN Training \and Neural Networks}}

\newcommand\blfootnote[1]{%
  \begingroup
  \renewcommand\thefootnote{}\footnote{#1}%
  \addtocounter{footnote}{-1}%
  \endgroup
}

\newcommand{\inserttitle}{\title{Multilevel-in-Layer Training for Deep Neural Network Regression}}

\newcommand{\insertabstract}[1]{\begin{abstract}#1\end{abstract}}

\newcommand{\insertauthors}{\author{Colin Ponce \\
Center for Applied Scientific Computing \\
Lawrence Livermore National Laboratory \\
\tt{ponce11@llnl.gov}
\And
Ruipeng Li \\
Center for Applied Scientific Computing \\
Lawrence Livermore National Laboratory \\
\tt{li50@llnl.gov}
\And
Christina Mao \\
Global Security Computing Applications Division \\
Lawrence Livermore National Laboratory \\
\tt{mao6@llnl.gov}
\And
Panayot Vassilevski \\
Fariborz Maseeh Department of Mathematics + Statistics \\
Portland State University \\
\tt{panayot@pdx.edu}
}\blfootnote{This work was performed under the auspices of the U.S. Department of Energy by Lawrence Livermore National Laboratory under Contract DE-AC52-07NA27344 (LLNL-JRNL-827231) and was supported by the LLNL-LDRD program under Project No. 19-ERD-019.}}

\newcommand{\insertbibstyle}{\bibliographystyle{plain}}

\input epsf
\usepackage{epsfig}
\usepackage{xcolor}
\usepackage{fullpage}
\usepackage{amsmath,amssymb,amsfonts}
\usepackage{graphicx}
\usepackage{enumerate}
\usepackage{bbm}
\usepackage{url,hyperref}
\usepackage{float}
\usepackage{multirow}
\usepackage{subfig}
\usepackage{placeins}   
\usepackage{enumitem}
\usepackage{subfig}
\captionsetup[table]{position=bottom}
% Package to subdivide paper into sections 
%\usepackage{subfiles}
\usepackage{algorithm, algpseudocode}
\usepackage[american]{babel}

\renewcommand{\div}{\operatorname{div}}

\newcommand{\bb}{\ensuremath{\mathbf{b}}}

\newcommand{\bs}{\ensuremath{\mathbf{s}}}

\newcommand{\bq}{\ensuremath{\mathbf{q}}}
\newcommand{\bx}{\ensuremath{\mathbf{x}}}
\newcommand{\by}{\ensuremath{\mathbf{y}}}
\newcommand{\bz}{\ensuremath{\mathbf{z}}}
\newcommand{\bn}{\ensuremath{\mathbf{n}}}
\newcommand{\bbf}{\ensuremath{\mathbf{f}}}

\newcommand{\bV}{\ensuremath{\mathbf{V}}}

\newcommand{\bH}{\ensuremath{\mathbf{H}}}

\newcommand{\bP}{\ensuremath{\mathbf{P}}}
\newcommand{\bQ}{\ensuremath{\mathbf{Q}}}

\newcommand{\bR}{\ensuremath{\mathbf{R}}}
\newcommand{\bc}{\ensuremath{\mathbf{c}}}
\newcommand{\bm}{\ensuremath{\mathbf{m}}}

\newcommand{\btau}{\ensuremath{\mathbf{\tau}}}

\def\balpha{{\boldsymbol \alpha}}

\newcommand{\T}{{\mathcal T}}

\def\bPi{{\boldsymbol \Pi}}

\def\bP{{\mathbf P}}

\begin{document}

%[FAS for DNN training]
\inserttitle

% NOTE: If compiling in arxiv style, the \maketitle command must go right after \insertauthors. However, if compiling in NLA style, the \maketitle command must go right after \insertkeywords.
\insertauthors
\maketitle

\insertabstract{A common challenge in regression is that for many problems, the degrees of freedom required for a high-quality solution also allows for overfitting. Regularization is a class of strategies that seek to restrict the range of possible solutions so as to discourage overfitting while still enabling good solutions, and different regularization strategies impose different types of restrictions. In this paper, we present a multilevel regularization strategy that constructs and trains a hierarchy of neural networks, each of which has layers that are wider versions of the previous network's layers. We draw intuition and techniques from the field of Algebraic Multigrid (AMG), traditionally used for solving linear and nonlinear systems of equations, and specifically adapt the Full Approximation Scheme (FAS) for nonlinear systems of equations to the problem of deep learning. Training through V-cycles then encourage the neural networks to build a hierarchical understanding of the problem. We refer to this approach as \emph{multilevel-in-width} to distinguish from prior multilevel works which hierarchically alter the depth of neural networks. The resulting approach is a highly flexible framework that can be applied to a variety of layer types, which we demonstrate with both fully-connected and convolutional layers. We experimentally show with PDE regression problems that our multilevel training approach is an effective regularizer, improving the generalize performance of the neural networks studied.}

\insertkeywords

\section{Introduction}\label{introduction}

A feedforward deep neural network (DNN) consists of a set of layers, each of which applies an affine mapping to its input vectors followed by a nonlinear \emph{activation function} acting componentwise on the transferred vectors. That affine mapping is controlled by a set of tunable parameters, the selection of which determines the overall action of the DNN. The task of training a deep neural network (DNN) is that of selecting these network parameters in order to enable the DNN to approximately capture a desired pattern. This is done by solving, or approximately solving, an optimization problem with an objective function that maps a choice of DNN parameters to a quantitative measure of the DNN's performance on the problem of interest; this objective function is commonly called a \emph{loss function} \cite{Goodfellow-et-al-2016}.

Training is typically accomplished through stochastic gradient descent (SGD) or its variants, which are cost-efficient and well-supported by existing software \cite{pytorch_neurips2019_9015, tensorflow2015-whitepaper}. A common challenge in the design of neural networks and their training is that, often, providing sufficient degrees of freedom in the network to obtain a high-quality solution also allows for overfitting, which harms generalization accuracy. \emph{Regularization} is a class of strategies that seeks to restrict the freedom of the training method, therefore encouraging simpler solutions. Ideally, this regularization discourages overfitting while still enabling high-quality solutions. A variety of regularization strategies exist, each of which encourages a different kind of simplicity, such as L2 regularization, L1 regularization, or dropout regulariztaion \cite{Goodfellow-et-al-2016}.

In this paper, we present a novel \emph{multilevel} regularization strategy that constructs and trains a hierarchy of neural networks. In this approach, all networks have the same number and type of layers, but each layer is a wider version of next network's layers, so networks get "thinner" as we move through the hierarchy. We base this approach on the techniques of Algebraic Multigrid (AMG), a class of methods traditionally used for solving linear and nonlinear systems of equations, and we leverage teh Full Approximation Scheme (FAS), a multilevel framework traditionally used for nonlinear systems of equations. We use the classic V-cycle in training, and develop a method to pass problem and parameter information, including momentum information, between the levels of the hierarchy during training. The resulting training method encourages the neural networks to learn the problem in a hierarchical manner. This approach is highly flexible, allowing each layer to coarsen (or not) independently of each other layer, and can be applied to a variety of layer types. We specifically demonstrate this for fully-connected and convolutional layers. Our approach is not a replacement for traditional, one-level learning methods such as SGD; instead, it uses that traditional optimizer as an inner iterator within the multilevel framework. We demonstrate the generalization benefits of our approach with a set of PDE-based regression problems. 

We refer to our approach as "multilevel-in-width" to distinguish it from prior multilevel works that change the number of layers in a neural network, which we think of as "multilevel-in-depth." Other authors, taking very different approaches from that presented here, have also observed regularization benefits from their multilevel methods \cite{planta2021resnets, cyr2019multilevelinitialization}. The present paper contributes to a small but growing body of evidence that suggests that multilevel neural network training methods have the potential to act as a beneficial regularizer over a significant range of problems, network architectures, and multilevel approaches, though this requires further investigation to fully understand.

Our open-source software, which we call MTNN (short for \emph{Multilevel Training of Neural Networks}, pronounced ``mountain'') is built on PyTorch and is free to download at \url{https://github.com/LLNL/MTNN}. It is designed for modularity and extensibility to facilitate the study of new problems and algorithmic choices.

% Structure of paper 
In Section~\ref{background}, we discuss prior work in AMG, current methods for training neural networks, and prior work on building notions of hierarchy into DNNs. We then give a summary of the FAS algorithm from an optimization point of view in Section~\ref{FAS for optimization problem}. Next, in Section~\ref{FAS applied to DNN training}, we formulate a two-level FAS in the setting of DNN training. 
Our specific choices of the operators (restriction and prolongation) are given in Section~\ref{FAS restriction and prolongation}. 
Section~\ref{DNN FAS algorithm} presents our two-level full-approximation scheme, which can be recursively extended to more levels. Section~\ref{convolutional extension} presents the extension of our method to convolutional neural networks. Finally, in Section~\ref{experiments} we demonstrate our techniques on a set of PDE regression problems, and in Section~\ref{conclusion} we conclude.

\section{Background and Related Work}\label{background}

\subsection{Multilevel Methods}
Multilevel methods are a class of iterative methods long used in scientific computing as a scalable solver for linear and non-linear systems of equations $g(\bx) = \bbf$. Multilevel methods is a more general class of algorithms emerging out of \emph{multigrid methods}, where this method was first applied to partial differential problems on grid. Traditional multigrid algorithms construct a hierarchy of discretizations of a geometric problem domain in which each discretization is a coarsened variant of the previous. Algebraic multigrid (AMG) also constructs a hierarchy of problems, but does in a fully-algebraic manner, independent of any underlying geometry.

When applied to (deterministic) systems of equations, multilevel methods accelerate convergence towards a solution through \emph{smoothing}, also known as relaxation methods, and \emph{coarse-grid corrections}, down-sampling residual error on the coarse grid back to the fine grid. Passing solution information between fine and coarse grids is known as \emph{restriction} and \emph{prolongation}.

A single multilevel iteration is known as a \emph{cycle}. Perhaps the most common of these is the V-cycle, the linear two-level version of which proceeds as follows:
\begin{enumerate}
    \item Smooth the solution iterate on the fine level problem $g$.
    \item Restrict problem residual to the coarse level problem $g_{c}$ with some restriction operator $R$.
    \item Solve the problem on the coarse level to obtain a coarse-grid correction.
    \item Prolong the coarse-grid correction to the fine level and add it to the solution iterate with some prolongation operator $P$.
    \item Smooth the solution iterate on the fine level problem $g$.
\end{enumerate}
This V-cycle is a framework in which one must make algorithmic choices on how to accomplish individual steps such as smoothing, restriction, and prolongation. 
For further reference, a more detailed introduction to multilevel and multigrid methods can be provided by Briggs et al. \cite{briggs-multigrid}, Hackbusch \cite{hackbusch2013multi} and Wesseling \cite{wesseling1995introduction}.

% discuss more here?
A classic problem for which linear multilevel methods are extremely effective is the Poisson problem, $-\nabla~\cdot~(\kappa~\nabla~u)~=~f$, defined over some problem domain, which models diffusion processes. As a result, data from Poisson problems are often amenable to the type of hierarchical representations discussed in the Introduction.

\subsubsection{Full Approximation Scheme} When multilevel methods are applied to nonlinear systems of equations, the V-cycle is similar, but passing the residual to the coarse level may no longer be appropriate. A generalization appropriate for nonlinear problems is the \emph{Full Approximation Scheme} (FAS) \cite{brandt1977fas}. In FAS, one still has a fine problem, $g$, a coarse problem, $g_c$, and restriction and prolongation operators, $R$ and $P$.

However, instead of restricting the residual as in Step 2 above, one instead restricts the solution iterate, and alters the right-hand side of the coarse problem to capture residual information. The FAS coarse-grid problem, then, is
\begin{equation}\label{FAS coarse problem}
{\widehat g_c}(\bx_c) = R \bb + (g_c(R \bx) - R g(\bx)).
\end{equation}
The above modification to the right-hand side is sometimes called a \emph{$\tau$-correction}; one can check that in the linear case it reduces back to the coarse-grid residual equation.

An extension of FAS to discretized optimization problems, ``MG/OPT'', was introduced by Nash, who also demonstrated its convergence properties \cite{Nash2000discretized}. The MG/OPT algorithm extends FAS by performing a line search on applying the coarse-grid correction (Step 4 in the above V-cycle).

\subsection{Deep Neural Networks (DNN)}
% Fully-connected networks 
A neural network (NN) consists of $n_L$ layers: an input layer, $n_L-2$ hidden layers, and an output layer. An input vector is denoted by $\by^{in}$. The hidden layers consists of weight matrices $W_k$ and bias vectors $\bb_k$, $k=1,...,n_L$, and an activation function $\varphi$ such as the well-known ReLU function.  The output of the neural network calculation is denoted by $\by^{out}$. There is no clear definition for the number of hidden layers required to consider a neural network a ``deep'' neural network, though often any network with more than 2 hidden layers is considered a DNN.

Forward propagation, also known as a \emph{forward pass}, is the procedure whereby the input (feature) vector is passed through the neural network layer by layer, applying an activation function after each layer to produce intermediate variables and finally an output vector. We discuss the notation and operation of a forward pass in greater detail in Section~\ref{FAS applied to DNN training}.

\subsubsection{Training Methods for Neural Networks}\label{sec:training_methods_for_nn}

An important difference between neural network training and traditional optimization is that the true objective function of interest, performance over the entire population of possible examples, is inaccessible; the performance over a training sample is merely an approximation to this true function of interest. This motivates a number of important design choices in neural network training methods. Training is typically done via a stochastic first-order method, such as stochastic gradient descent, AdaGrad \cite{duchi-adagrad}, RMSProp, Adam \cite{kingma-adam}, or other variations \cite{ma-quasi-hyperbolic, lucas-momentum}; this both accelerates computation and reduces the chances of getting stuck in a local minimum that is an artifact is the particular training set. Momentum maintains an exponentially decaying sum of past gradients, which helps to accelerate training and jump over local minima by allowing the "big ideas" to compound \cite{sutskever-momentum}. Finally, training all the way to convergence on the training set would likely result in severe overfitting, even with regularization, and so early stopping is almost universal \cite{Goodfellow-et-al-2016}. 

These methods are all fundamentally variations on gradient descent. Second-order methods such as Newton-Raphson have been standard in optimization for some time \cite{avriel-programming}, and in fact is often much more effective than gradient descent-style methods. However, second order methods require the computation or approximation of the Hessian matrix of the objective function, requiring the computation of $O(n^2)$ Hessian elements for $n$ parameters. In the case of DNNs, even moderately sized networks contain millions of parameters, and so a Hessian matrix is very slow to compute (and sometimes does not fit in memory); consequently, second-order methods have not become mainstream for DNNs. Approximate second-order methods for DNNs with linear storage requirements, however, continues to be an active area of research and are beginning to become more competitive \cite{tan-second-order}.

\subsubsection{Hierarchy and Multilevel Methods in Neural Network Architectures}

Although SGD and its variations do not naturally capture hierarchy, notions of hierarchy are quite common in neural network architectures. Convolution neural networks (CNNs) for image processing are often described as building up concepts in a hierarchical manner, capturing simple ideas such as edges and shading in the first layer, and capturing more complex image elements in later layers. Beyond CNNs, the autoencoder is one of the most popular architectural designs today \cite{Goodfellow-et-al-2016}. There are many variations and applications of autoencoders but they all fundamentally work by hierarchically compressing input data into a low-dimension representation before hierarchically decompressing it back to its original size.

Another network architecture that deserves special mention is U-net \cite{ronneberger-unet}. This network, through its layers, progressively makes an images smaller and smaller (less height and width) but deeper and deeper (more ``color'' channels), before then passing the data through layers that make it smaller and shallower again, eventually producing a processed image of the original size. The architecture design is highly analogous to the V-Cycle that is ubiquitous in multilevel methods \cite{briggs-multigrid}, and works because it processes images in a hierarchical fashion.

In recent years, a small but growing body of literature has explored multilevel methods in the training itself. In \cite{calandra2020levenbergmarquardt}, the authors develop a multilevel-in-width Levenberg-Marquardt method for training a neural network with one hidden layer to find a discretization-free approximate solution of a PDE. Most other works currently in this space focus on multilevel-in-depth methods for deep residual networks (ResNets), exploiting the dynamical perspective that views a ResNet as taking forward Euler steps on an ODE: G{\"u}nther et. al. \cite{gunther2020layerparallel} uses multilevel parallel-in-time methods to achieve parallelism in the forward and backward propagation; Kirby et. al. \cite{kirby2020GPU} extends this approach to GPUs,  and Cyr et. al. \cite{cyr2019multilevelinitialization} uses this framework to develop a cascadic initialization strategy with regularization benefits. Gaedke-Merzh{\"a}user et. al., \cite{gaedke20201multilevel} and Planta et. al. \cite{planta2021resnets} also study ResNets, but develop variants of MG/OPT to solve the problem in contrast to the parallel-in-time approach.

Both \cite{planta2021resnets} and \cite{cyr2019multilevelinitialization} show that, even though multilevel methods are often motivated by speed, the networks that result from multilevel training methods can have better performance than those trained without them. We observe this in our present work as well.

%%%%%%% FAS %%%%%%%%%%%%%%
\section{Application of Full Approximation Scheme (FAS) to an optimization problem}\label{FAS for optimization problem}

In this section, we develop the framework for applying the Full
Approximation Scheme (FAS) to optimization problems and present some results
that will guide the development of FAS-based methods for deep
learning.

Our optimization problem of interest is 
\begin{equation}\label{fine level problem}
g(\bx) \mapsto \min,
\end{equation}
for a given nonlinear functional $g:\;{\mathbb R}^n\mapsto {\mathbb
  R}$. We assume that the gradient of $g$, $\nabla g$ is
computationally available, and we will be using (in the presentation)
the Hessian of $g$, denoted $H_g$, which is a matrix of its second partial
derivatives. Using a Taylor expansion, we have the following
approximate expansion
\begin{equation*}
g(\bx + \bq) \approx g(\bx) + \bq^T \nabla g(\bx) +\frac{1}{2} \bq^T H_g(\bx) \bq.
\end{equation*}

Now, to define a two-level FAS, we need a coarse version of $g$,
$g_c:\; {\mathbb R}^{n_c} \mapsto {\mathbb R}$, where $n_c \ll n$. We
introduce three mappings:
\begin{itemize}
\item A coarse approximation to a fine quantity,
$\bx_c = \bPi \bx$ for a mapping $\bPi:\; {\mathbb R}^n \mapsto {\mathbb R}^{n_c}$.
\item A fine-to-coarse restriction mapping 
$\bR:\; {\mathbb R}^n \mapsto {\mathbb R}^{n_c}$.
\item A coarse-to-fine interpolation mapping 
$\bP:\; {\mathbb R}^{n_c} \mapsto {\mathbb R}^n$.
\end{itemize}

To be precise, given a fine vector $\bx$, the coarse vector 
$\bx_c = \bPi \bx$ is such that $\bP\bx_c = \bP \bPi \bx$ approximates (in some norm) the vector $\bx$. 

The coarse FAS functional ${\widehat g}_c$ is defined in terms of
the coarse functional $g_c$, and in TL FAS is chosen so that
${\widehat g}_c$ approximates $g$. Specifically, let $\bx$ be a
current approximation to the solution of the original problem. Then,
the FAS method is based on the solution of the auxiliary coarse
problem
\begin{equation}\label{eq:tau_corrected_aux}
{\widehat g_c}(\bx_c)\equiv 
g_c(\bx_c) - \bx^T_c\left (\nabla g_c(\bPi \bx) -
\bR \nabla g(\bx) \right )
\mapsto \min .
\end{equation}
Once $\bx_c$ is computed, the next approximation to \eqref{fine level problem} is 
\begin{equation}\label{TL FAS correction}
\bx : = \bx + \bP (\bx_c - \bPi \bx).
\end{equation}

The term $\nabla g_c(\bPi \bx) - \bR \nabla g(\bx)$ is sometimes
refereed to as the \emph{$\tau$-correction}. It replaces $\nabla
g_c(\bPi \bx)$, the gradient of the coarse functional at the coarse
approxmation point, with $\bR \nabla g(\bx)$, the restriction of the
fine gradient. When computed exactly, this makes ${\widehat g_c}(\bx_c)$ match, up to first
order, a restricted version of $g(\bx)$ (though as we will see in Section \ref{sec:tau_construction} we do not compute this exactly).

\subsection{The FAS coarse problem solution}
Let $\bx_c$ solve the coarse problem \eqref{eq:tau_corrected_aux}, so that
$\nabla {\widehat g}(\bx_c) = 0$.
Hence,
\begin{equation*}
\nabla g_c(\bx_c) - \nabla g_c(\bPi \bx) + \bR \nabla g(\bx) = 0.
\end{equation*}
Using Taylor expansion of $\nabla g_c$ about $\bPi\bx$, gives
\begin{equation*}
\nabla g_c(\bx_c) = \nabla g_c(\bPi \bx)+\sum\limits_{|\balpha|\ge 1} (\bx_c - \bPi \bx)^{\balpha} \partial^{\balpha} \nabla g_c(\bPi\bx).
\end{equation*}
Above (and in what follows), we use multi-index notation, whereby for any $\balpha = (\alpha_i)^d_{i=1} \in {\mathbb R}^d$ and any vector $\by=(y_i)^d_{i=1} \in {\mathbb R}^d$, $\by^{\balpha}$ is shorthand for
$\prod\limits_{i:\;1\le i \le d} y^{\alpha_i}_i$. Also, for a scalar function $f = f(\bx)$ of several variables $\bx =(x_i)^d_{i=1} \in {\mathbb R}^d$, we denote $\partial_i f = \frac{\partial f}{\partial x_i}$, and by $\partial^{\balpha}f(\bx)$ the repeated partial derivatives of corresponding order $\alpha_i$  with respect to the $i$th variable $x_i$ of $f$. 

Combining the last two equations above, we obtain
\begin{equation}\label{FAS correction property}
\bR \nabla g(\bx) +\sum\limits_{|\balpha|\ge 1} (\bx_c - \bPi \bx)^{\balpha} \partial^{\balpha} \nabla g_c(\bPi\bx) = 0.
\end{equation}

\subsection{The FAS-correction}
Once we have computed $\bx_c$, we update the new fine approximation 
$\bx_{new} = \bx+ \bP (\bx_c-\bPi \bx)$. 
Our goal is to see how close the updated $\bx_{new}$ is to the exact solution, or rather how close 
$\nabla g (\bx_{new})$ is to zero. \
For that purpose, we expand $\nabla g (\bx_{new})$ about 
$\bx$ and multiply by $\bR$ on the left which gives
\begin{equation}\label{expansion for FAS correction}
\bR \nabla g (\bx_{new}) = 
\bR \nabla g (\bx) +\sum\limits_{|\balpha|\ge 1}
(\bP(\bx_c-\bPi\bx))^{\balpha} \bR\partial^{\balpha} \nabla g(\bx).
\end{equation}
If we want to make the first order terms of \eqref{expansion for FAS correction} and 
\eqref{FAS correction property} equal, we obtain
\begin{equation*}
\sum\limits_{|\balpha|= 1} (\bx_c - \bPi \bx)^{\balpha} \partial^{\balpha} \nabla g_c(\bPi\bx) =
\sum\limits_{|\balpha|=1}
(\bP(\bx_c-\bPi\bx))^{\balpha} \bR\partial^{\balpha} \nabla g(\bx),
\end{equation*}
which one can rewrite as
\begin{equation}\label{complicated identity}
\sum\limits^{n_c}_{i=1} (\bx_c - \bPi \bx)_i
\partial_i \nabla g_c(\bPi\bx) =
\sum\limits^n_{i=1}
(\bP(\bx_c-\bPi\bx))_i \bR\partial_i \nabla g(\bx).
\end{equation}
Introducing the Hessian matrices (note that they are symmetric)
\begin{equation*}
H_g(\by) = (\partial_i \partial_j g(\by))^n_{i,j=1}
\text{ and } 
H_{g_c}(\by_c) = (\partial_i \partial_j g_c(\by_c))^{n_c}_{i,j=1},
\end{equation*}
identity \eqref{complicated identity} can be written more concisely as
\begin{equation*}
H_{g_c}(\bPi\bx) (\bx_c-\bPi\bx) =
\bR H_g(\bx) \bP(\bx_c-\bPi \bx).
\end{equation*}
In other words, we have proved the following result.
\begin{theorem}\label{FAS theorem}
Let $\bR,\bP,\bPi$ and $g$, $g_c$ satisfy the identity
\begin{equation}\label{FAS identity}
\bR H_g(\bx) \bP = H_{g_c}(\bPi\bx).
\end{equation}
Then the updated approximation $\bx_{new}$ by the FAS algorithm satisfies the approximate first order optimality condition 
\begin{equation*}
\bR \nabla g (\bx_{new})  = 0.
\end{equation*} \qed
\end{theorem}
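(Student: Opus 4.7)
The plan is to assemble the two Taylor expansions that have already been set up immediately before the theorem statement, and to observe that the hypothesized identity~\eqref{FAS identity} is exactly the structural condition that makes the first-order terms of these two expansions cancel. Specifically, equation~\eqref{FAS correction property} is the first-order optimality condition for the $\tau$-corrected coarse problem~\eqref{eq:tau_corrected_aux}, expanded about $\bPi\bx$, while equation~\eqref{expansion for FAS correction} is the expansion of $\bR \nabla g(\bx_{new})$ about the current fine iterate $\bx$. Both expansions contain the common zeroth-order term $\bR \nabla g(\bx)$, so my first step is to subtract~\eqref{FAS correction property} from~\eqref{expansion for FAS correction} to eliminate that common term and expose the relation between the remaining multi-index contributions on the two sides.

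The next step is to match first-order terms. The linear-in-$(\bx_c - \bPi\bx)$ piece on the coarse side is $H_{g_c}(\bPi\bx)(\bx_c - \bPi\bx)$, and the linear piece on the fine side (after factoring the action of $\bR$ and $\bP$) is $\bR H_g(\bx) \bP (\bx_c - \bPi\bx)$; this is exactly the rewriting already performed between~\eqref{complicated identity} and the matrix form immediately preceding the theorem. Applying the hypothesis $\bR H_g(\bx)\bP = H_{g_c}(\bPi\bx)$ to the right-hand side then shows that these two linear terms are identical, so they cancel against each other in the subtracted expansion. Up to first order, the right-hand side of~\eqref{expansion for FAS correction} reduces to zero, which is the \emph{approximate} first-order optimality condition $\bR \nabla g(\bx_{new}) = 0$ claimed in the theorem.

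The main obstacle is really only a matter of being explicit about the word \emph{approximate}: the subtraction leaves behind multi-index terms of order $|\balpha| \ge 2$ in $(\bx_c - \bPi\bx)$ (and in $\bP(\bx_c - \bPi\bx)$ on the fine side), which are discarded. I will note this explicitly so that the conclusion is understood as an identity modulo higher-order corrections in the coarse-grid correction, and I will emphasize that~\eqref{FAS identity} is precisely the Galerkin-type compatibility condition that makes the matrix form of the first-order matching hold at the current linearization point $\bx$.
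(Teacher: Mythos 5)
Your proposal is correct and follows essentially the same route as the paper: the paper likewise combines the coarse first-order optimality condition \eqref{FAS correction property} with the expansion \eqref{expansion for FAS correction} of $\bR\nabla g(\bx_{new})$ about $\bx$, rewrites the matched first-order terms in Hessian form as $H_{g_c}(\bPi\bx)(\bx_c-\bPi\bx)=\bR H_g(\bx)\bP(\bx_c-\bPi\bx)$, and invokes \eqref{FAS identity} to conclude $\bR\nabla g(\bx_{new})=0$ up to higher-order terms. Your explicit remark that the conclusion holds only modulo the discarded $|\balpha|\ge 2$ terms is consistent with the paper's use of the word \emph{approximate}.
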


To satisfy condition \eqref{FAS identity}, since $H_{g_c}$ is a symmetric matrix, we require
\begin{equation}\label{R is P transpose}
\bR = \bP^T.
\end{equation}

For the application of our main interest (training of DNNs), we do not
form the mappings $\bP, \bR, \bPi$ as matrices, but will instead
derive methods for computing the action of these operators. Also, in
general we do not choose $\bR = \bPi$. Instead, we choose $\bPi$ and
$\bP$ such that the product $\bQ = \bP\bPi$ admits an approximation
property (that we motivate in Section~\ref{the choice of g_c}) and
such that
\begin{equation}\label{coarse level projection}
\bPi \bP = I,
\end{equation}
which implies that $\bQ$ is a projection ($\bQ^2 = \bQ$).

\begin{remark}\label{remark: smoothing}
The above theorem shows that the gradient of the functional $g$ is orthogonal (up to first order terms)  to the coarse space $\text{Range}(\bR^T) = \text{Range}(\bP)$. 
Hence, what remains to be done, is to reduce the error in a  space complementary to the coarse space, which is customarily referred to as \emph{smoothing}. In other words, we are looking at $\bx^\perp$ such that
\begin{equation*}
\nabla g (\bx_{new}+\bx^{\perp})  = 0.
\end{equation*}
 The two-level FAS algorithm alternates between a smoothing step and a coarse level solution step as described in detail in Section~\ref{DNN FAS algorithm}. 
\end{remark}

\subsection{Motivating guidelines for the design of the coarse functional $g_c$}\label{the choice of g_c}

One choice is to define
\begin{equation}\label{exact choice}
g_c(\bx_c) = g(\bP\bx_c),
\end{equation}
which admits excellent approximation but is non-optimal in terms of
cost. Let us examine the choice \eqref{exact choice} to motivate an approximation. Assume that $\bP$ and $\bPi$ are such that $\bP\bPi \bx$
approximates $\bx$, i.e., $\|\bx - \bP \bPi \bx\|$ is ``{small}'' in
some norm $\|.\|$.  Let $\bP$, represented as a matrix, admit the form
$\bP= (p_{i,i_c})$, $1\le i \le n$, $1 \le i_c\le n_c$.

Under the coarse functional choice \eqref{exact choice}, we have 
\begin{equation*}
\partial_{j_c} g_c(\bx_c) = \sum\limits^n_{j=1} \partial_j g(\bP\bx_c) p_{j,j_c},
\end{equation*}
and similarly
\begin{equation*}
\partial_{i_c}\partial_{j_c} g_c(\bx_c)= \sum\limits^n_{i,j=1} \partial_i \partial_j g(\bP\bx_c) p_{j,j_c}p_{i,i_c}.
\end{equation*}
In other words, we have
\begin{equation*}
H_{g_c}(\bx_c) = \bP^TH_g(\bP\bx_c)\bP.
\end{equation*}
Now, suppose $\bx_c$ is derived from the restriction $\bx_c = \bPi \bx$. Then, we have
\begin{equation*}
H_{g_c}(\bPi\bx) = \bP^TH_g(\bP\bPi \bx) \bP = \bP^T H_g(\bx)\bP  + \bP^T\left (H_g(\bP\bPi\bx) - H_g(\bx)\right )\bP.
\end{equation*}
Because $\|\bx - \bP \bPi \bx\|$ is small, the second term in this formula is small, so
\begin{equation*}
H_{g_c}(\bPi\bx) \approx \bP^T H_g(\bx) \bP,
\end{equation*}
so that condition \eqref{FAS identity} is approximately
satisfied.

The exact choice of $g_c$ is problem-dependent, and we design such a coarse functional for neural networks below, but this motivates some guidelines:
\begin{enumerate}
    \item[(A)] It is desirable that $g_c$ be chosen so that $H_{g_c}(\bx_c) \approx \bP^TH_g(\bP\bx_c)\bP$
    \item[(B)] It is desirable that $\bR = \bP^T$.
    \item[(C)] It is desirable that $\bP$ and $\bPi$ be chosen so that $\|\bP^T\left (H_g(\bP\bPi\bx) - H_g(\bx)\right )\bP\|$ is small. As the Hessian matrix is a continuous function of $\bx$, this guideline can be simplified to a desire that $\|\bx - \bP \bPi \bx\|_2$ be small.
\end{enumerate}
These properties are desirable because, when they are true, Theorem \ref{FAS theorem} approximately holds, ensuring an approximation for first-order consistency.

%%%%%%%%%%%%%%%%%%%%%%%%%%%%%%%%%%%%%%%%%%%%%%%%%%%%%%%%%%%%%%%%%%%%%%%%%%%%%%%%%%%%%%%%%%%%%%%%%%%%%%%%%%%%%%%%%%%%%%%%
\section{Application of Full Approximation Scheme to Deep Neural Network training}\label{FAS applied to DNN training}

In the previous section, we derived some useful properties of two-level FAS for optimization along with a set of advice to help guide the development of
$\bP$ and $\bPi$ operators. In this section, we apply those results to
the problem of training deep neural networks via FAS.

A feedforward DNN (deep neural network) is defined as follows:
\begin{itemize}
\item A set of $n_L$ layers, where the $k$th layer is associated with a vector variable
$\by_k$. 
\item Two consecutive layers are connected by an activation function 
$\varphi$ which acts  on each component of the vectors involved.
\item Also, two consecutive layers, $k$ and $k+1$,  are connected by a transition coefficient matrix
$W_k$, and a shift vector $\bb_k$.
\item To compute the parameters $\{W_k\}^{n_L}_{k=0}$ and $\{\bb_k\}^{n_L}_{k=0}$, the DNN makes use of two sets of given quantities, $\{\by^{in}_s\}$ - input, and $\{\by^{out}_s\}$- the respective output.
\end{itemize}
The DNN is \emph{trained} by computing the weights $\{W_k\}$ and $\{\bb_k\}$, such that 
for the set of given  inputs $\{\by^{in}_s\}$, it produces an accurate approximations to the respective given set of outputs $\{\by^{out}_s\}$, i.e., for each $s$, $DNN(\by^{in}_s) \approx \by^{out}_s$. The DNN parameters $\{W_k\}$ and $\{\bb_k\}$, are computed by solving a suitable optimization problem. 

For any given set of parameters $W = \{W_k\}$ and $\bb = \{\bb_k\}$, let  $\bx$ be a vector quantity that unrolls and concatenates them, that is, let
\begin{equation}\label{unrolled}
\bx = \left [W_0[:], W_1[:], \dots,\; W_{n_L}[:], \bb_0[:],
    \bb_1[:],\dots, \bb_{n_L}[:] \right ]
\end{equation}
be the  unrolled vector of all learnable
parameters in the DNN. Given $\bx$, the DNN acts on any input vector data $\by^{in}$,  i.e., computes 
$\by^{in} \mapsto \by^{out} := DNN(\bx;\;\by^{in})$, by evaluating the recurrence relation
\begin{equation}\label{DNN recurrence}
\begin{array}{rl}
\by_1 & = \varphi\left (W_0 \by^{in} + \bb_0\right ),\\
\by_{k+1} & = \varphi\left (W_k \by_k + \bb_k \right ),\; \text{ for } 0\le k \le n_L-1,\\
\by^{out} & = \varphi\left (W_{n_L} \by_{n_L}+ \bb_{n_L} \right ).
\end{array}
\end{equation}

\begin{remark}
The above definition extends to convolutional neural networks (CNNs)
as well, using Toeplitz structured $W_k$ matrices and $b_k$
vectors. We discuss this and the application of FAS to CNNs in
Section~\ref{convolutional extension}.
\end{remark}

Training the network selects the weight matrices $W= \{W_k\}$ and bias
vectors $\bb = \{\bb_k\}$ by approximately solving the optimization
problem

\begin{equation}\label{eq:nn_obj}
  g(\bx) = g(W, \bb) = \sum_{s=1}^N L_s(DNN(\bx;\;\by^{in}_s)),
  \end{equation}
where $L_s$ is some loss function associated with the $s$th training sample.

One example of  a \emph{loss functional} is the mean-squared error
\begin{equation}\label{mean squared  loss functional}
g(\bx) = \frac{1}{N}\sum\limits^N_{s=1} \|\by^{out}_s - DNN(\bx;\; \by^{in}_s)\|^2.
\end{equation}
In the present setting, to train the DNN means to find a set of parameters $\bx = (W,\;\bb)$ that approximately minimize the above functional $g(\bx)$. However, as discussed in Section \ref{sec:training_methods_for_nn}, one does not generally wish to exactly minimize this functional due to overfitting concerns. Rather, one attempts to find a set of parameters that approximately minimize $g(\bx)$, and achieve a small but nonzero gradient.

The popularity of the DNNs is due to the fact that the gradients of
the loss functional $\nabla g(\bx)$ are readily computable by a \emph{backpropagation}, and a number of popular libraries such as
TensorFlow \cite{tensorflow2015-whitepaper} and PyTorch\cite{pytorch_neurips2019_9015} readily compute them.  Moreover,
in the optimization algorithms employed in the DNN training we may use
only a portion of the sum in \eqref{mean squared loss functional} (or
\eqref{eq:nn_obj}) by employing randomness. That is, for a \emph{minibatch}
of $m$ randomly selected indices $\bs = (s_1,s_2,\;\dots,\;s_m)$, we
compute the gradient of
\begin{equation}\label{batch functional}
g_{\bs}(\bx) = \frac{1}{m}\sum\limits^m_{j=1} L_{s_j}(DNN(\bx;\;\by^{in}_{s_j})).
\end{equation}
This partial (stochastic) gradient, $\nabla g_{\bs}(\bx)$,  is used to update  the iterates $\bx:= \bx - \tau \nabla g_{\bs}(\bx)$ for a chosen step length $\tau \in (0,1]$,   in a gradient descent algorithm to minimize $g(\bx)$.

%%%%%%%%%%%%%%%%%%%%%%%%%%%%%%%%%%%%%%%%%%%%%%%%%%%%%%%%%%%%%%%%%%%%%%%%%%%%%%%%%%%%%%%%%%%%%%%%%%%%%%%%%%%%%%%%%%%%%%%%
\section{Restriction, Interpolation and $\tau$-correction operators for the DNN training}
\label{FAS restriction and prolongation}

In this section, we motivate and define our restriction and
interpolation operators, as well as the $\tau$-correction for the case
of training DNNs, in a manner consistent with the guidelines of Section \ref{the choice of g_c}.

First, we clarify notation. Consider a DNN with the vector quantity
$\bx$ containing its learnable parameters as in
Equation~\eqref{unrolled}.  To define a two-level FAS algorithm we
need an interpolation mapping $\bP$ and a fine-to-coarse restriction
mapping $\bPi$ such that $\bQ = \bP\bPi$ is a projection. We will
choose $\bP$ and $\bPi$ such that $\bPi \bP = I$.

As the parameters of a DNN are defined layerwise, we will also
decompose the $\bP$ and $\bPi$ operators in this manner. Consider the
sets $P = \{P_k\}^{n_L+1}_{k=0}$ and $\pi = \{\pi_k\}^{n_L+1}_{k=0}$,
where $P_k$ and $\pi_k$ act on vectors associated with the $k$th
layer; that is, given a data vector $\by_k$ at layer $k$, $\pi_k
\by_k$ converts $\by_k$ to a coarsened $\by_k^c$. We set $\pi_{in} =
\pi_0$, $P_{in} = P_0$ and $\pi_{out} = \pi_{n_L+1}, P_{out} =
P_{n_L+1}$.  We also assume that for each $k$, $\pi_k$ and $P_k$ make a
projection, that is,
\begin{equation}\label{layer projection property}
\pi_k P_k = I.
\end{equation}

Given a set of coarse parameters $\{W^c_k\}$ and $\{\bb^c_k\}$, neural networks in fact provide a natural way to produce a coarse functional $g_c$ that satisfies Guideline (A) of Section \ref{the choice of g_c}. We simply define a coarse DNN as a coarse version of the same recursion as in~\eqref{DNN recurrence}:
\begin{equation}\label{coarse DNN recurrence}
\begin{array}{rl}
\by^c_1 & = \varphi\left (W^c_0 \by^{in}_c + \bb^c_0\right ),\\
\by^c_{k+1} & = \varphi\left (W^c_k \by^c_k + \bb^c_k \right ),\; \text{ for } 0\le k \le n_L-1,\\
\by^{out}_c & = \varphi\left (W^c_{n_L} \by_{n_L}+ \bb^c_{n_L} \right ).
\end{array}
\end{equation}
That is, we have $\by^{out}_c = DNN_c(\bx_c;\;\by^{in}_c)$, where $\bx_c$ combines all parameters $\{W^c_k\}$ and $\{\bb^c_k\}$ as in Equation~\eqref{unrolled}.

The training objective function is a summation over a coarse set of training samples:

\begin{equation}\label{eq:coarse_nn_obj}
  g_c(\bx_c) = g_c(W^c, b^c) = \sum_{s=1}^N L^c_s(DNN_c(\bx_c;\; \by^{in}_{c,s})).
  \end{equation}
If we do not have coarse counterparts of the training samples $\by^{in}_c$ and $\by^{out}_c$, we use the only given (fine level) set of samples, i.e., we let
$\by^{in}_{c,s} = \by^{in}_s$ and $\by^{out}_{c,s} = \by^{out}_s$. This implies that we let $\pi_0 = I, P_0 = I$ and $\pi_{n_L+1} =I, P_{n_L+1} = I$.
Also, in that case we have $L^c_s = L_s$.  Because $g_c$ is designed to approximate $g$ here, Guideline (A) is satsified.

In what follows, we construct a coarse DNN from a fine DNN parameters; and vice versa, given  a coarse DNN, we embed it in a finer  DNN.

\subsection{Fine-to-coarse DNN restriction}

Let us assume that $\|\by_k - P_k \pi_k \by_k \|$ is small in the
sense discussed in Section \ref{the choice of g_c}. Then the following
is an approximation to the standard DNN recurrence~\eqref{DNN
  recurrence}:
\begin{equation}\label{DNN recurrence in subspace}
\begin{array}{rl}
\pi_1 \by_1 & = \pi_1 \varphi\left (W_0 P_0 \pi_{in}  \by^{in} + \bb_0\right ),\\
\pi_{k+1} \by_{k+1} & = \pi_{k+1}\varphi\left (W_k P_k \pi_k \by_k + \bb_k \right ),\; \text{ for } 0\le k \le n_L-1,\\
\pi_{out} \by^{out} & = \pi_{out} \varphi\left (W_{n_L} P_{n_L} \pi_{n_L} \by_{n_L}+ \bb_{n_L} \right ).
\end{array}
\end{equation}
By letting $\by^c_k = \pi_k  \by_k$,  we have the following recurrence involving coarse vectors $\by^c_k$,
\begin{equation}\label{coarse nonlinear RAP DNN recurrence}
\begin{array}{rl}
\by^c_1 & = \pi_1 \varphi\left (W_0 P_0  \by^{in}_c + \bb_0\right ),\\
\by^c_{k+1} & = \pi_{k+1}\varphi\left (W_k P_k \by^c_k + \bb_k \right ),\; \text{ for } 0\le k \le n_L-1,\\
\by^{out}_c & = \pi_{n_L+1} \varphi\left (W_{n_L} P_{n_L} \by^c_{n_L}+ \bb_{n_L} \right ).
\end{array}
\end{equation}

Finally, we assume the approximation
\begin{equation}\label{activation function approx with pi}
  \varphi(\pi \bz) \approx \pi \varphi(\bz).
\end{equation}
We discuss in Section~\ref{choice for pi and P} when this is valid. Given this,
recurrence~\eqref{coarse nonlinear RAP DNN recurrence} can be
approximated as
\begin{equation*}
\begin{array}{rl}
\by^c_1 & = \varphi\left (\pi_1 W_0 P_0  \by^{in}_c + \pi_ 1 \bb_0\right ),\\
\by^c_{k+1} & = \varphi\left (\pi_{k+1} W_k P_k \by^c_k + \pi_{k+1}\bb_k \right ),\; \text{ for } 0\le k \le n_L-1,\\
\by^{out}_c & = \varphi\left (\pi_{n_L+1} W_{n_L} P_{n_L} \by^c_{n_L}+ \pi_{n_L} \bb_{n_L} \right ).
\end{array}
\end{equation*}
That is, letting
\begin{equation}\label{coarse DNN parameters}
W^c_k = \pi_{k+1} W_k P_k \text{ and } \bb^c_k = \pi_{k+1}\bb_k,
\end{equation}
we end up with the recurrence that implements the actions of a coarse DNN, $\by^{in}_c \mapsto \by^{out}_c = DNN_c(\bx_c;\; \by^{out}_c)$:
\begin{equation}\label{restricted DNN recurrence}
\begin{array}{rl}
\by^c_1 & = \varphi\left (W^c_0 \by^{in}_c +  \bb^c_0\right ),\\
\by^c_{k+1} & = \varphi\left (W^c_k\by^c_k + \bb^c_k \right ),\; \text{ for } 0\le k \le n_L-1,\\
\by^{out}_c & = \varphi\left (W^c_{n_L}  \by^c_{n_L}+  \bb^c_{n_L} \right ).
\end{array}
\end{equation}

This motivates our fine-to-coarse mapping, $\bPi$, which we define as
\begin{equation}\label{fine-to-coarse transfer mapping}
  \begin{array}{rl}
    \bx = & [(W_k),\; (\bb_k)], \\
    \bPi \bx = & [(\pi_{k+1} W_k P_k),\; (\pi_{k+1}\bb_k)].
  \end{array}
\end{equation}
Note that, despite the layer-wise decomposition, this is still a
linear mapping of $\bx$, so there exists a matrix $\bPi$ whose action
is this.

\subsection{Coarse-to-fine DNN prolongation}
Consider the coarse DNN recurrence \eqref{restricted DNN recurrence}. We assume the approximation
\begin{equation}\label{activation function approx with P}
  P \varphi(\bz_c) \approx \varphi(P \bz_c),
\end{equation}
discussed in Section~\ref{choice for pi and P}, and using the property $\pi_k P_k = I$, we have
\begin{equation*}
\arraycolsep=1.5pt
\begin{array}{rll}
P_1 \by^c_1 & = P_1 \varphi\left (W^c_0 \pi_0 (P_0 \by^{in}_c) +  \bb^c_0\right ) & \approx \varphi\left (P_1 W^c_0 \pi_0 (P_0 \by^{in}_c) +  P_1\bb^c_0\right ),\\
P_{k+1} \by^c_{k+1} & = P_{k+1} \varphi\left (W^c_k\pi_k (P_k \by^c_k) + \bb^c_k \right )& \approx
\varphi\left (P_{k+1} W^c_k\pi_k (P_k \by^c_k) +P_{k+1} \bb^c_k \right ),\;  k \le n_L-1,\\
P_{out} \by^{out}_c & = P_{out} \varphi\left (W^c_{n_L} \pi_{n_L} (P_{n_L} \by^c_{n_L})+  \bb^c_{n_L} \right ) & \approx
\varphi\left (P_{n_L+1} W^c_{n_L} \pi_{n_L} (P_{n_L} \by^c_{n_L})+  P_{n_L+1} \bb^c_{n_L} \right ).
\end{array}
\end{equation*}
The latter formulas motivates us to define the interpolation mapping $\bP$ as follows.
Let $\bx_c = [(W^c_k),\; (\bb^c_k)]$, then
\begin{equation}\label{interpolation mapping}
\bP\bx_c = [(P_{k+1}W^c_k \pi_k),\; (P_{k+1}\bb^c_k)].
\end{equation}

Note that this does not require that the $W_k$ matrices be square. It only requires that the input dimension to layer $k$ match the output dimension of layer $k-1$.

\subsection{The restriction mapping $\bR= \bP^T$}

The definition of $\bR$ is already given by~\eqref{R is P
  transpose}. The layerwise definition is
\begin{equation}\label{restriction mapping}
\bR \bx = [(P^T_{k+1} W_k \pi^T_k),\; (P^T_k \bb_k)].
\end{equation}
The following result, satisfying Guideline (B) of Section \ref{the choice of g_c}, holds.
\begin{lemma}\label{lemma: R = transpose of P}
The restriction mapping $\bR$ defined in \eqref{restriction mapping} and the interpolation mapping $\bP$ defined in \eqref{interpolation mapping} are transpose to each other, i.e.,
\begin{equation*} \bR = \bP^T.
\end{equation*}
\end{lemma}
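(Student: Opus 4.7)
The plan is to endow the parameter space with its natural inner product — the Euclidean inner product on the unrolled vector \eqref{unrolled}, which amounts to
\begin{equation*}
\langle \bx,\bx'\rangle \;=\; \sum_k \trace(W_k^T W'_k) \;+\; \sum_k \bb_k^T \bb'_k,
\end{equation*}
and then verify the defining adjoint identity $\langle \bP \bx_c,\bx\rangle = \langle \bx_c, \bR \bx\rangle$ for arbitrary fine $\bx$ and coarse $\bx_c$. Because this inner product splits as a direct sum over the weight blocks and the bias blocks, and further over layers $k$, it suffices to check the identity block-by-block.

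For the $k$th weight block I would start from $\trace\bigl((P_{k+1}W^c_k\pi_k)^T W_k\bigr)$, apply $(ABC)^T = C^T B^T A^T$ to obtain $\trace\bigl(\pi_k^T (W^c_k)^T P_{k+1}^T W_k\bigr)$, and then use the cyclic property of the trace to move the leading $\pi_k^T$ to the right, giving $\trace\bigl((W^c_k)^T (P_{k+1}^T W_k \pi_k^T)\bigr)$. This is precisely the Frobenius inner product of $W^c_k$ with the $k$th weight block of $\bR\bx$ as prescribed in \eqref{restriction mapping}. For the $k$th bias block the computation is even shorter: $(P_{k+1}\bb^c_k)^T \bb_k = (\bb^c_k)^T (P_{k+1}^T \bb_k)$, which matches the corresponding bias block of $\bR\bx$. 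Summing the per-layer contributions then yields $\langle \bP \bx_c,\bx\rangle = \langle \bx_c, \bR \bx\rangle$, hence $\bR = \bP^T$.

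The only real obstacle is bookkeeping rather than mathematics: one must fix a consistent vectorization convention (e.g., column-major) so that the Frobenius inner product on each $W_k$ agrees with the Euclidean inner product on its slice of the unrolled vector \eqref{unrolled}, and then confirm that the layer-wise actions of $\bP$ and $\bR$ really correspond to a block-diagonal linear operator in that representation. Once this is nailed down, no analytic content is required — the lemma is an immediate consequence of $(AB)^T = B^T A^T$ and the cyclic property of the trace. Note that the argument never uses $\pi_k P_k = I$ or the approximations \eqref{activation function approx with pi}, \eqref{activation function approx with P}; the transpose relation holds as a purely algebraic identity between the definitions \eqref{interpolation mapping} and \eqref{restriction mapping}.
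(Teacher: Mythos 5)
Your proposal is correct and takes essentially the same route as the paper: both verify the adjoint identity $\langle \bP\bx_c,\bx\rangle=\langle \bx_c,\bR\bx\rangle$ by splitting the unrolled parameter inner product into per-layer weight and bias blocks and rearranging each block. The paper performs the weight-block rearrangement with explicit index summations where you use $\trace$ and its cyclic property, but this is the same manipulation in different notation, and like you it never invokes $\pi_k P_k = I$ or the activation-function approximations.
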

\begin{proof}
We start with the inner product of $\bx = [(W_k),\;(\bb_k)]$ and $\bP \bx_c = \bP [(W^c_k),\; (\bb^c_k)]$, 
\begin{equation*}
(\bx_c)^T (\bP^T \bx) = \bx^T (\bP \bx_c) = \sum\limits_i x_i (\bP \bx_c)_i.
\end{equation*}
With $\bb_k = (b_{k,i})$, $\bb^c_k = (b^c_{k,i_c})$,  $W_k = \left ((W_k)_{r,s}\right )$ and $W^c_k = \left ((W^c_k)_{r_c,s_c} \right )$, based on the Definition~\eqref{interpolation mapping}, we have the representation 
\begin{equation}\label{term i plus ii}
\bx^T (\bP \bx_c)  = \sum\limits_i x_i (\bP \bx_c)_i = \sum\limits_k \sum\limits_i b_{k,i} (P_k \bb^c_k)_i + \sum\limits_k \sum\limits_{i=(r,s)} (W_k)_{r,s} (P_{k+1} W^c_k \pi_k)_{r,s}.
\end{equation}
The first term on the right gives
\begin{equation}\label{term i}
\sum\limits_k \sum\limits_i b_{k,i} (P_k \bb^c_k)_i  = \sum\limits_k (\bb^c_k)^T P^T_k \bb_k= \sum\limits_k (\bb^c_k)^T \bb^c_k.
\end{equation}
For the second term, by rearranging terms and  using the definition \eqref{restriction mapping} of $\bR \bx$, 
we have
\begin{equation}\label{term ii}
\begin{array}{rl}
 \sum\limits_k \sum\limits_{i=(r,s)} (W_k)_{r,s} (P_{k+1} W^c_k \pi_k)_{r,s} & = \sum\limits_k  \sum\limits_{i=(r,s)} \sum\limits_{i_c= (r_c,s_c)}
 (W_k)_{r,s} (P_{k+1})_{r,r_c}  (W^c_k)_{r_c,s_c} (\pi_k)_{s_c,s} \\
& = \sum\limits_k \sum\limits_{i_c=(r_c,s_c)}  (W^c_k)_{i_c} \sum\limits_{r,s}   (P_{k+1})_{r,r_c}(W_k)_{r,s}  (\pi_k)_{s_c,s}\\
& = \sum\limits_k \sum\limits_{i_c=(r_c,s_c)}   (W^c_k)_{i_c} (P^T_{k+1} W_k \pi^T_k)_{i_c}\\
& = \sum\limits_k \sum\limits_{i_c} (W^c_k)_{i_c} (W^c_k)_{i_c}.
\end{array}
\end{equation}
Thus substituting \eqref{term i} and \eqref{term ii} in \eqref{term i plus ii}, using formulas \eqref{restriction mapping} defining the components of $\bR \bx$, we arrive at 
\begin{equation*}
(\bx_c)^T (\bP^T \bx)  = \bx^T (\bP \bx_c)   = (\bx_c)^T \bR \bx,
\end{equation*}
which proves the desired result.
\end{proof}

\begin{remark}\label{remark: P, Pi and R are mappings}
We note that the linear mappings $\bPi$, $\bP$, as well as $\bR= \bP^T$ are not explicitly formed as matrices. However, from their definitions it is clear that their actions are readily available
via the actions of the layer matrices $P_k$, $\pi_k$ and their transposes. 
That is, in an implementation, only the layer matrices $\pi_k$ and $P_k$ need to be  constructed and stored explicitly as (sparse) matrices.
\end{remark}

\begin{lemma}\label{projection property of pi p}
With the definitions \eqref{fine-to-coarse transfer mapping} of $\bPi$ and \eqref{interpolation mapping} of $\bP$, assuming \eqref{layer projection property} for the layer matrices $\pi_k$ and $P_k$,  the following main property holds
\begin{equation*}
\bPi \bP = I,
\end{equation*}
which implies that $\bQ = \bP \bPi$ is a projection ($\bQ^2 = \bQ$). 
\end{lemma}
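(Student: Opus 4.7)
The plan is to prove $\bPi \bP = I$ by a direct layerwise calculation, exploiting the per-layer projection property $\pi_k P_k = I$ given in \eqref{layer projection property}. Since $\bPi$ and $\bP$ are linear and decompose layer by layer into independent blocks acting on $(W_k)$ and $(\bb_k)$, it suffices to verify the identity block by block.

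First, I would take an arbitrary coarse parameter vector $\bx_c = [(W^c_k),\;(\bb^c_k)]$ and apply $\bP$ using the definition \eqref{interpolation mapping}, producing the fine vector $\bP \bx_c = [(P_{k+1} W^c_k \pi_k),\;(P_{k+1}\bb^c_k)]$. Then I would apply $\bPi$ to this fine vector using \eqref{fine-to-coarse transfer mapping}. The weight-matrix block of index $k$ becomes
\begin{equation*}
\pi_{k+1}\bigl(P_{k+1} W^c_k \pi_k\bigr) P_k = (\pi_{k+1} P_{k+1})\, W^c_k\, (\pi_k P_k),
\end{equation*}
and by the assumption $\pi_k P_k = I$ (applied with indices $k$ and $k+1$) this collapses to $W^c_k$. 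Similarly the bias block becomes $\pi_{k+1}(P_{k+1}\bb^c_k) = \bb^c_k$.

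Collecting across all layers $k$ yields $\bPi \bP \bx_c = [(W^c_k),\;(\bb^c_k)] = \bx_c$, and since $\bx_c$ was arbitrary we conclude $\bPi \bP = I$. For the projection statement, I would simply compute
\begin{equation*}
\bQ^2 = (\bP \bPi)(\bP \bPi) = \bP\,(\bPi \bP)\, \bPi = \bP\, I\, \bPi = \bP \bPi = \bQ,
\end{equation*}
so $\bQ$ is indeed a projection.

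There is no serious obstacle here; the proof is entirely mechanical. The only point requiring a little care is bookkeeping the layer indices: the outer factors in the weight block (namely $\pi_{k+1}$ and $P_{k+1}$) come from the layer-$(k+1)$ projection, while the inner factors ($\pi_k$ and $P_k$) come from the layer-$k$ projection. Once this is noted, the rest is direct use of $\pi_k P_k = I$.
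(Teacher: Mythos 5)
Your proof is correct and follows essentially the same route as the paper's: apply $\bP$ to an arbitrary coarse parameter vector, then apply $\bPi$, and collapse the resulting blocks $\pi_{k+1}P_{k+1}W^c_k\pi_k P_k$ and $\pi_{k+1}P_{k+1}\bb^c_k$ using $\pi_j P_j = I$. The only (harmless) addition is that you spell out the one-line verification $\bQ^2 = \bP(\bPi\bP)\bPi = \bQ$, which the paper leaves implicit.
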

\begin{proof}
Given $\bx_c = [(W^c_k),\; (\bb^c_k)]$, define $\bx = [(W_k),\;(\bb_k)] = \bP \bx_c$, i.e., $W_k = P_{k+1}W^c_k \pi_k$ and $\bb_k =  P_{k+1}\bb^c_k$. 
Then for $\bPi \bx = [(\pi_{k+1} W_k P_k),\; (\pi_{k+1}\bb_k)]$ since $\pi_j P_j = I$, we have 
\begin{equation*}
\begin{array}{rl}
\pi_{k+1} W_k P_k &= \pi_{k+1}  P_{k+1}W^c_k \pi_kP_k = W^c_k,\\
\pi_{k+1}\bb_k & = \pi_{k+1}P_{k+1}\bb^c_k = \bb^c_k.
\end{array}
\end{equation*}
That is, $\bPi\bx = \bPi \bP \bx_c = \bx_c$, which  completes the proof.
\end{proof}

\subsection{A simple choice for $\pi_k$ and $P_k$}\label{choice for pi and P}
Finally, we must choose $\bPi$ and $\bP$ to satisfy Guideline (C) of Section \ref{the choice of g_c}. Intuitively, if two neurons have nearly the same weights and biases, then we could average them together to construct a coarse neuron that approximates the two fine neurons. Therefore, we choose $\pi_k$ to perform pairwise averaging and $P_k$ to perform piecewise constant interpolation over selected pairwise aggregates.
The choice of the pairwise aggregates is done through a Heavy Edge Matching (HEM) method \cite{metis}.
The mappings are simply written as follows
\begin{equation}\label{pi and P}
P_k = \begin{bmatrix}
1 \\
1 \\
 & 1 \\
 & 1 \\
  & & \ddots
\end{bmatrix} \in \mathbb{R}^{n_k \times n^c_k},\quad 
\pi_k = \begin{bmatrix}
1/2 & 1/2 \\
    &    & 1/2 & 1/2 \\
 & &  & & \ddots
\end{bmatrix} \in \mathbb{R}^{n^c_k \times n_k}.
\end{equation}

When the activation function $\varphi$ is the commonly-used ReLU function,
\begin{equation}\label{relu}
  \varphi(x) = \left\{
  \begin{array}{rl}
    x, & x \geq 0 \\
    0, & x < 0
  \end{array} \right.,
\end{equation}
these choices for $P_k$ and $\pi_k$ justify the approximations
\eqref{activation function approx with pi} and \eqref{activation
  function approx with P}. In particular, \eqref{activation function
  approx with pi} is valid because the HEM (discussed in Section \ref{sec:HEM} and shown in Algorithm \ref{alg:greedyhem}) matches together neurons whose parameter vectors point in approximately the same direction, and as a result, tend to produce positive and negative dot products with input vectors at the same time. The nonlinearity of the ReLU depends entirely on that positive/negative property of its input, and so \eqref{activation function approx with pi} will be exactly true except when dot products with paired neurons produce different signs. Similarly, \eqref{activation function approx with P} is in fact an equality, by design, due to the piecewise-constant prolongation. Other activation functions are likely effective here as well; in general, if we approximate $\varphi(\bz)$ near $\bz$ linearly with a \emph{matrix} $\widehat \varphi$, then \eqref{activation function approx with P} holds when $\pi \widehat \varphi \approx \widehat \varphi \pi$. In particular, other piecewise-linear activation functions such as the "leaky ReLU" satisfy this.

A variant to \eqref{pi and P} that is often found better is to
scale $P_k$ by the row norms of $W_k$, i.e.,
\begin{equation}\label{weighted pi and P}
    \tilde P_k = D_k P_k, \quad \tilde \pi_k = (P_k^T D_k P_k )^{-1} P_k^T,
\end{equation}
where $D_k$ is the diagonal matrix that contains the row norms of
$W_k$, so that the rows that correspond to the paired neurons in an aggregate are invariant
with the projector $Q$
if they are linear dependent,
i.e., they have the maximum similarity score in the HEM algorithm.
Apparently, \eqref{pi and P} is a special case where
$D_k=I$.
Furthermore, $P_k^T D_k P_k$ is a diagonal matrix due to the structures of $P_k$ and $D_k$.

\begin{remark}
Although we write Equations~\eqref{pi and P} and \eqref{weighted pi and P} as matrices, these are very sparse matrices and the \emph{actions} of these matrices can be efficiently computed without ever directly forming $\tilde P_k$ and $\tilde \pi_k$. In our software, we have implemented matrix-free versions for fast, practical computation.
\end{remark}

\subsection{HEM algorithm for coarsening}\label{sec:HEM}
Standard HEM algorithms have been used as coarsening algorithms for aggregation-based AMG,
see, e.g., \cite{aggamg}.
Finding the optimal HEM of a graph is in general difficult, 
but greedy algorithms can usually work well.
A sketch of greedy HEM is given in Algorithm~\ref{alg:greedyhem},
where  input $S_k$ is the strength-of-connection matrix, 
which defines the similarities between neurons.
Cosine angles of the rows of $W_k$ are used for $S_k$.
In Algorithm~\ref{alg:greedyhem}, neurons are visited in a given order and at each time 
if neuron $i$ has
not been  matched, neuron $j$ that has  
the heaviest edge between $i$ and $j$
among all the remaining unmatched neurons is selected.
Edges that have weights smaller than a predefined threshold $\theta$ are  disregarded in the matching.
The results are kept in $\pi$ and $\zeta$, where
$\pi(i)$ keeps the matched neuron of $i$, and
$\zeta(i)$ stores
the index of the aggregate of $i$ and $\pi(i)$.
Clearly, on exit of the algorithm, we have $\pi(\pi(i))=i$ (if $i=\pi(i)$, $i$ is 
a singleton), and $\zeta(i)=\zeta(\pi(i))$.
Roughly speaking, a coarsening factor close to 2 is typically obtained. The HEM method can also be parallelized for faster processing on GPUs via parallel matching methods such as \cite{naumov-matching}.

\begin{algorithm}[ht]
\caption {A greedy HEM algorithm}
\label{alg:greedyhem}
\begin{algorithmic}[1]
\State \textbf{Input:} Strength-of-connection matrix $S_k$ and strength threshold $\theta$
\State \textbf{Output:} matching $\pi$, aggregation $\zeta$, and the number of aggregates $n_k^c$
\For{$i=1,2,\ldots,n_k$}
\State $\pi(i) := 0$
\EndFor
\State $n_k^c := 0$ 
\For{$i=1,2,\ldots,n_k$}
\If{$\pi(i)=0$}
\If{$\exists j \neq i : \pi(j)=0 \land S_{ij} > \theta \land \forall k : S_{ij} \ge S_{ik}, \pi(k)=0$}
\State $\pi(i) := j, \; \pi(j) := i$
\State $n_k^c := n_k^c + 1, \; \zeta(i) := n_k^c, \; \zeta(j) := n_k^c$
\Else \Comment{match $i$ to itself}
\State $\pi(i) := i, \; n_k^c := n_k^c + 1, \; \zeta(i) := n_k^c$
\EndIf
\EndIf
\EndFor
\end{algorithmic}
\end{algorithm} 

\subsection{Constructing a tau-correction}\label{sec:tau_construction}

As discussed above, the tau correction in optimization is $\tau =
\nabla g_c(\bPi \bx) - \bR \nabla g(x)$, where $\bR= \bP^T$, and $\bPi$ are the  two linear fine-to-coarse mappings,\eqref{restriction mapping}-\eqref{fine-to-coarse transfer mapping}, defined in our DNN setting.
 Note that the operation $\bR \nabla g(\bx)$ is
well-defined, as $\nabla g(\bx)$ has the same structure as $\bx$.

One wrinkle is that the calculation of a gradient requires a summation
over all training samples in the training set. Often, training sets
have a large number of samples in them, so computing the true gradient
would be computationally unwieldy. We approximate this by simply
calculating a stochastic tau-correction, in which we average over a
small subset $\bs = (s_1,\;s_2,\;\dots,\;s_m)$  of the total training set, and then scale appropriately:

\begin{equation}\label{tau}
\begin{array}{rl}
\nabla_{\bx}g_\bs (\bx) &\displaystyle = \sum_{j=1}^{m} \nabla_{\bx} L_{s_j}(DNN(\bx,\; y^{in}_{s_j}) \\
\nabla_{\bx_c}g^c_\bs(\bx_c) &\displaystyle = \sum_{j=1}^{m} \nabla_{\bx_c} L^c_{s_j}(DNN_c(\bx_c,y^{in}_{c,s_j}) \\
\btau &\displaystyle = \frac{N}{m} \left (\left (\nabla_{\bx_c}  g^c_{\bs}\right )(\bPi \bx) - \bR \nabla_{\bx}g_{\bs} (\bx) \right ),
\end{array}
\end{equation}
where $m$ is the size of a tau batch, $s_j$ is the index of the
$j$'th sample in the tau batch, $\nabla_\by$ stands for a gradient with respect to variable $\by$, and $N$ is the number of minibatches in the dataset as a whole. 

Due to this stochastic approximation, we in general will never construct a coarse problem such that $\nabla \widehat g(\bx_c) = 0$ exactly. This is consistent with how neural networks are trained: Specifically, because traditional training is performed in a stochastic manner over minibatches, exact gradients are never computed, and one never achieves an $\bx$ such that $\nabla g(\bx) = 0$ exactly, or even iteratively converges to such a point. In fact, this behavior is often considered desirable, see Section \ref{sec:training_methods_for_nn}.

%%%%%%%%%%%%%%%%%%%%%%%%%%%%%%%%%%%%%%%%%%%%%%%%%%%%%%%%%%%%%%%%%%%%%%%%%%%%%%%%%%%%%%%%%%%%%%%%%%%%%%%%%%%%%%%%%%%%%%%%
\section{The DNN FAS Algorithm}\label{DNN FAS algorithm}

The original objective function $g$ is defined in Equation \eqref{eq:nn_obj}, and the
auxiliary objective function in Equation \eqref{eq:coarse_nn_obj}. Given a tau correction, we can
construct a tau-corrected  auxiliary problem like in Equation \eqref{eq:tau_corrected_aux}. 
% One difference between this and earlier blog posts in which we discuss FAS over a single variable is that 

We recall that the FAS step cannot be our only method for improving
$\bx$.  As discussed earlier, Remark~\ref{remark: smoothing}, we need
a correction in a space complementary to the coarse space
$\text{Range}(\bP)$, a step customarily referred to as
\emph{smoothing}.  That is because $\bx_c$ exists in a
reduced-dimension space and so cannot represent all the possible
changes we might make.
% This need for a non-FAS iteration method is common to all multilevel methods. For historical reasons, this non-FAS iteration method is called a smoother.

There exist a wide variety of methods one might choose for the
smoothing step. In general, it should be cost-efficient and
effectively target the complement to $\text{Range}(\bP)$. The neural network community has already provided a number of efficient first-order training methods which can be leveraged for smoothing. We choose here to use stochastic gradient descent (SGD). Our reason for using this is that SGD in this context is quite analogous to the commonly-used Jacobi smoother in traditional multigrid methods. Indeed, when applying the Full Approximation Scheme to the deterministic problem of minimizing quadratic forms, (deterministic) gradient descent is exactly a Richardson iteration, very similar to the Jacobi iteration commonly used as a multigrid smoother \cite{briggs-multigrid}. A two-level FAS V-cycle, then, consists of the following steps:

\begin{enumerate}
\item Apply one or more iterations of SGD to the original problem $g(\bx) = g(W,\bb)$.
  
\item Calculate the stochastic tau correction $\btau$.
  
\item Apply the restriction operation to calculate the elements of the coarse neural network $W^c, \bb^c$.
  
\item Apply one or more iterations of SGD to the corrected auxiliary problem ${\widehat g}(\bx_c) = {\widehat g}(W^c,\bb^c)$ (see \eqref{eq:tau_corrected_aux}).
  
\item Apply the prolongation operation $\bP(\bx_c - \bPi \bx)$ and then the FAS update $\bx \leftarrow \bx_{new} = \bx + \bP(\bx_c - \bPi \bx)$.
  
\item Apply one or more iterations of SGD to the original problem $g(\bx) = g(W,b)$.
 
\end{enumerate}

FAS V-cycles of greater than two levels are achieved through recursion at Step 4.

\subsection{A Few Notes on Algorithmic Tuning}

\subsubsection{Minibatch Selection}

We perform minibatch selection in a manner intended to closely match that of traditional one-level training. We first shuffle the entire training dataset, and then proceed by cycling through the dataset. When computing the stochastic tau correction, we draw the next $m$ minibatches in shuffled order, compute $\btau$ over those $m$ minibatches as in Equation \eqref{tau}, and then use those same $m$ minibatches for SGD at the coarse level. A new set of $m$ minibatches are drawn when restricting to the next-coarser level. Because SGD at the coarse level is performed using the same minibatches as were used for the tau-correction, one might view this as a \emph{stochastic V-cycle}. Because multigrid V-cycles typically only apply a small number of smoothing steps at each level, we tend to keep $m$ fairly small as well, often between 2 and 8.

\subsubsection{Momentum}

Momentum \cite{sutskever-momentum} is an important component in training methods, as it helps to smooth out variations and move parameters more consistently in directions that matter. In fact, SGD with momentum (and many traditional training methods) use gradient information to update momentum vectors, and only directly update parameter vectors through momentum. In our setting, important information about desirable directions of movement is learned at the coarse levels, and we should not throw this important information away upon prolongation. Therefore, we modify this approach slightly to restrict and prolong \emph{momentum} as well as network parameters. As there is a momentum parameter for each network parameter, a momentum vector $\bm$ has the same structure as a parameter vector $\bx$.

Upon restriction, we also restrict the momentum vector using the same restriction operator $\bm_c = \bPi \bm$, and upon prolongation, we perform the same coarse-grid correction
\begin{equation}\label{momentum correction}
\bm = \bm + \bP(\bm_c - \bPi \bm).
\end{equation}
We show the two-level FAS algorithm, with multilevel momentum, in Algorithm~\ref{TL FAS algorithm}.

\subsubsection{HEM Re-Matching}

The HEM matching in Algorithm \ref{alg:greedyhem} depends on the weights, which change during the training process. Therefore, matchings tend to change somewhat throughout the training process. We find that it is generally beneficial to periodically re-compute matchings, but that the training is not particularly sensitive to the rate of re-matching. In our tests, we choose to re-match at a rate that keeps the computational cost of matching minimal compared to the overall training cost, such as every 10 or 50 V-Cycles.

\begin{algorithm}
  \caption{Two-Level Full Approximation Scheme for Deep Neural Networks}\label{TL FAS algorithm}
  \begin{algorithmic}

    \Require{$\bx, g$}

    \While{not converged}
    \State $\bx \gets \text{smooth}(\bx, g)$

    \State Construct $\bPi, \bP$ via Heavy Edge Matching

    \State $\bx_c \gets \bPi \bx, \quad \bm_c \gets \bPi \bm$ \Comment{\eqref{fine-to-coarse transfer mapping}}
  
    \State $\btau \gets \frac{N}{m} \left (\left (\nabla_{\bx_c}  g^c_{\bs}\right )(\bPi \bx) - \bR \nabla_{\bx}g_{\bs} (\bx) \right )$ \Comment{\eqref{tau}}

    \State $\bx_c \gets \text{smooth}(\bx_c, {\widehat g_c}(\bx_c))$, where ${\widehat g_c}(\bx_c) = g_c(\bx_c) - \bx_c^T\btau$ \Comment{\eqref{eq:tau_corrected_aux}}

    \State $\bx \gets \bx + \bP(\bx_c - \bPi \bx), \quad \bm \gets \bm + \bP(\bm_c - \bPi \bm)$ \Comment{\eqref{TL FAS correction}}
  
    \State $\bx \gets \text{smooth}(\bx, g)$

    \EndWhile

\end{algorithmic}
\end{algorithm}

\subsubsection{Maintaining Stability}\label{maintaining stability}

In some cases, this multilevel hierarchies developed here can cause instabilities, and in severe cases, divergence of the method. Here we discuss a number of hyperparameters that can be used in the method to maintain stability when this happens:
\begin{itemize}
    \item The learning rate can be decreased at coarser levels of the hierarchy. We often choose to decrease the learning rate by a fixed factor $\eta$ at each hierarchy level. This can prevent the coarse parameters from moving outside the area in which the coarse problem approximates the fine problem effectively. However, the learning rate does not impact the stored momentum vector \cite{Goodfellow-et-al-2016}, so that must be stabilized separately.
    \item The coarse-grid parameter correction and coarse-grid momentum correction in Algorithm \ref{TL FAS algorithm} and in Equations \eqref{TL FAS correction}, \eqref{momentum correction} can be scaled down with parameters $\alpha_p, \alpha_m \in (0, 1]$ as follows:
    \begin{align*}
        \bx \gets & \bx + \alpha_p \bP (\bx_c - \bPi \bx) \\
        \bm \gets & \bm + \alpha_m \bP (\bm_c - \bPi \bm).
    \end{align*}
    This is akin to the $\alpha$ parameter used in the line search in MG/OPT \cite{Nash2000discretized}; however, we use constants here rather than the line search of MG/OPT due to the added computational cost.
    \item The $\tau$-correction in Algorithm \ref{TL FAS algorithm} and in Equation \eqref{eq:tau_corrected_aux} can be scaled down by $\gamma \in [0, 1]$, so that 
    $$
    {\widehat g_c}(\bx_c) = g_c(\bx_c) - \gamma \bx_c^T\btau.
    $$
    Because our $\tau$-correction is stochastic, even when we are near the optimum, any given set of minibatches could potentially result in a much larger $\btau$, which could then cause an inappropriate coarse-grid correction.
\end{itemize}

We note that in MG/OPT \cite{Nash2000discretized}, it is also suggested one could add a quadratic penalty to the coarse problem to prevent too much movements; however, MG/OPT expects one to fully solve the coarse problem, whereas we only take a few SGD steps. In our experiments, tuning the weight of that quadratic penalty caused its own stability challenges, so we do not use that strategy.

When making these adjustments in Section \ref{experiments}, we typically choose $\eta \in [\sqrt{2}, 2 \sqrt{2}], \alpha_p = 1, \alpha_m = 0.2, \gamma = 0.125$. These were chosen by trial and error.

\section{Extension to Convolutional Neural Networks}\label{convolutional extension}

In this section we describe how the above method can be extended from
fully-connected networks to convolutional neural networks (CNNs). A
CNN typically consists of one or more convolutional layers followed by
one or more fully-connected layers. They are often effective in
learning patterns over array-structured data. In a CNN, the object
analogous to a fully-connected neuron is the \emph{channel};
therefore, our goal in this section will be to develop methods for
coarsening over the channels of a CNN. The resulting convolution
operators will have the same width and stride, but the number of
channels in each layer will be restricted.

Convolutional operators are often represented as higher-order tensors,
but this tensor acts on the inputs in a linear fashion, and so the
action can be represented as matrix-vector multiplication. As a
result, convolutional layers are mathematically quite similar to
fully-connected layers, and the above method extends in a
straightforward manner. That is, the action of a convolutional layer
can still be viewed in the form
$$
\by_{k+1} = \varphi(W_k \by_k + \bb_k),
$$
where $W_k$ and $\bb_k$ have a particular structure.

To apply our restriction and prolongation operators, we require an
understanding of the structure of the matrix $W$ associated with a
convolutional layer. This depends on the dimension of the input array,
the number of input channels, and the number of output channels. We
build up our understanding progressively as follows:

\begin{itemize}
  
\item Consider input $\by$ to a convolutional layer that is a
1-dimensional array and has a single channel; for example, audio data
has this structure. For simplicity, suppose we have a convolutional
layer with a single convolutional channel, which we represent as a
vector $\bc = (c)_{i=-\ell}^\ell$ and a scalar $b$. Then the action of
the convolution is
\begin{equation}
  \sum_{j=-\ell}^\ell y(i+\ell-j) \cdot c(j) + b.
\end{equation}
We can represent this as a matrix-vector multiplication $W \by$ where $W$ is the banded Toeplitz matrix
\begin{equation}
  W = \begin{bmatrix}
    c_{-\ell} & c_{-\ell+1} & \cdots      & c_{k-1}     & c_k     & 0      & \cdots & 0      & 0      & 0 \\
    0         & c_{-\ell}   & c_{-\ell+1} & \cdots      & c_{-1}  & c_k    & 0      & \cdots & 0      & 0 \\
    0         & 0           & c_{-\ell}   & c_{-\ell+1} & \cdots  & c_{-1} & c_k    & 0      & \cdots & 0 \\
    \vdots    &             &             &             &         &        &        &        &        & \vdots
  \end{bmatrix}.
\end{equation}

\item More complex convolutional layers extend this in a natural
way. One-dimensional, single-channel arrays with multiple
convolutional channels results in a matrix
$$
W = [W^1 ; \cdots ; W^p]^T
$$
that is a column of banded Toeplitz blocks.

\item Two-dimensional, single-channel arrays with a single
  convolutional channel produces a weight matrix $W$ that has banded
  block-Toeplitz structure. See Figure~\ref{banded block Toeplitz}.

  \begin{figure}
    \centering
    \includegraphics[width=10cm]{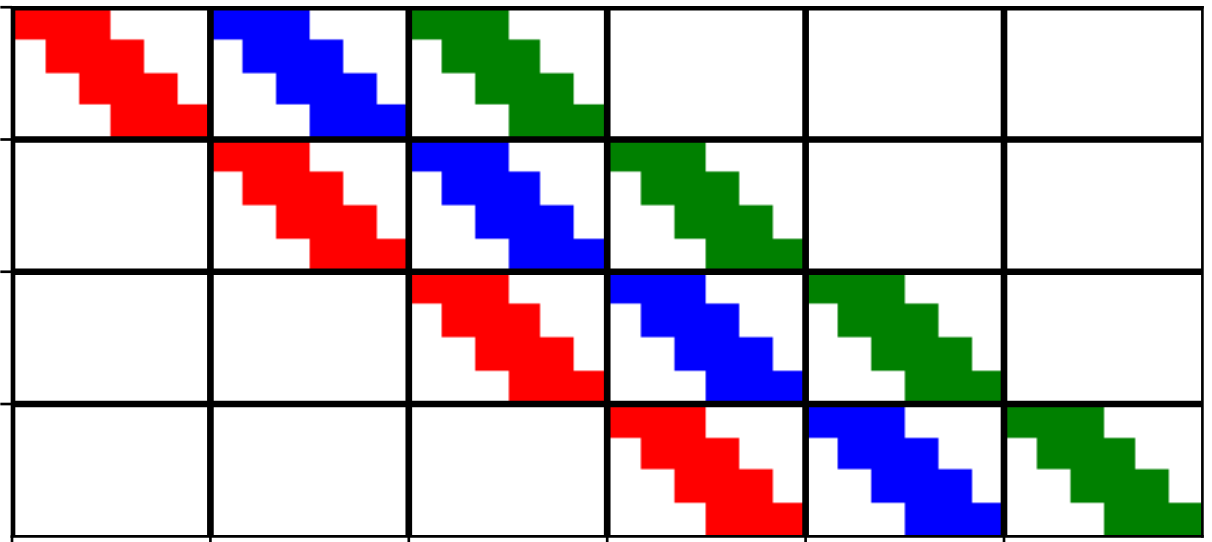}
    \caption{The banded block-Toeplitz structure of a convolutional weight matrix with a single channel applied to a 2D input with a single channel.}
    \label{banded block Toeplitz}
  \end{figure}

\item Adding additional output channels creates a block-column matrix
  in which each block has this structure. Thus, two-dimensional,
  single-channel arrays with multiple convolutional channels produce a
  column of matrix blocks each of the form of Figure~\ref{banded block
    Toeplitz}. We might say this then has \emph{block banded block
    Toeplitz} structure.

\item Finally, multi-dimensional input arrays with multiple inputs
  channels and multiple output channels results in a weight matrix $W$
  of the form
  \begin{equation}
    W = \begin{bmatrix}
      W^{1,1} & W^{1,2} & \cdots & W^{1,\beta} \\
      \cdots  & \vdots  &        & \cdots      \\
      W_{\alpha,1} & W_{\alpha,2} & \cdots & W^{\alpha, \beta}
    \end{bmatrix}
  \end{equation}
  in which each submatrix $W^{i,j}$ is a matrix of the form of
  Figure~\ref{banded block Toeplitz}.

  A key observation here is that every block row corresponds to an
  output channel and every block column corresponds to an input
  channel. As each channel (block row) here is analogous to a neuron
  (individual row) in a fully-connected layer, we see that, analogous
  to Section~\ref{choice for pi and P}, we can coarsen two channels
  together by simply averaging their channels together.

\end{itemize}

Therefore, we see that applying Algorithm~\ref{TL FAS algorithm} to
CNNs proceeds in exactly the same manner. Restrictions occurs through
averaging channels as in $\pi_k$ of Equation~\eqref{pi and P}, and
prolongation occurs through piecewise constant interpolation of
channels as in $P_k$ of Equation~\eqref{pi and P}.

%%%%%%% Experimental Setup %%%%%%%%%
\section{Experiments}\label{experiments}

In this section we demonstrate the use of our multilevel FAS trainer on a set of PDE-based regression problems to compare this to SGD. These experiments were performed on the Lassen compute cluster at Lawrence Livermore National Laboratory. For each experiment, we used a single IBM Power9 nodes with 44 CPU cores at 3.5 Ghz and 256 GB of memory and a single NVIDIA V100 Voltage graphics card. What we find with these results is that in some cases the use of FAS can improve $L_2$ loss, and in all cases it achieves $L_2$ loss at least as good as SGD. In addition, in all cases tested, it \emph{also} improves upon $L_\infty$ loss even though the training was against $L_2$. This implies that FAS-trained neural networks has better worst-case performance than those trained with traditional SGD, suggesting better generalization power.

Note that in these experiments, to achieve a more faithful comparison in computational work, we borrow a strategy from the multigrid literature and report \emph{work units} rather than epochs. A single work unit, as we define it, is the computational cost to compute a forward and backpropagation over one minibatch. We do this because coarser levels of the multilevel hierarchy are less expensive to compute over; as a rule of thumb, half as many neurons in each level implies weight matrices with 1/4 as many elements, and so computation at a coarse level requires roughly 1/4 as much work as the next-finer level. For example, if we perform a V-Cycle over a 2-level hierarchy with 2 minibatch presmoothing iterations, 2 minibatch coarse grid iterations, and then 2 minibatch postsmoothing iterations, that costs 4.5 work units. In addition, computing the tau-correction requires both a fine-level and a coarse-level forward and backpropagation over each minibatch that will be used at the coarse level; in our example, that would add an additional 2.5 work units, for a total of 7 work units for the V-Cycle. In general, if $s$ is the cost of a forward/backward pass, then work units can be computed as
\begin{align}
    & \mbox{(smoothing cost)} + \mbox{(tau correction work for current level)} + \mbox{(tau correction work for next level)} \notag \\
    = & \left(\sum_{i=0}^{k-2} (1/4)^i \cdot 2s + (1/4)^{k-1} \cdot s \right) + \left(\sum_{i=1}^{k-2} (1/4)^i \cdot 2s + (1/4)^{k-1} \cdot s\right) + \left(\sum_{i=0}^{k-3} (1/4)^i \cdot 2s + (1/4)^{k-1} \cdot s \right) \notag \\
    = & 6s - 10s / 4^{k-1}.
\end{align}
We see that, for any hierarchy depth, Work Units is bounded from above by $6s$.

As discussed in Section \ref{sec:training_methods_for_nn}, deep learning, despite being formalized in the same way as a traditional optimization problem, has different goals from traditional optimization. Most significantly, the actual objective function of interest, the neural network's performance over the entire population of \emph{possible} data, is inaccessible, and the training performance is only an approximation to this. For this reason, the goal is not to achieve asymptotic convergence to a fixed minimum point, but rather to achieve good performance on a validation set.

\subsection{Poisson Problem}
We consider solving
\begin{equation}
-\nabla\cdot(\kappa\nabla u)=f \quad \mbox{in} \quad \Omega=[0,1]^2
\end{equation}
on unit square $\Omega=[0,1]^2$ with a $32\times 32$ mesh and $u=0$ on the boundary, where
\begin{align*}
 \kappa(x,y)&= 1.1 + \cos[k_x\pi(x'+a_x)]  \cos[k_y\pi(y'+a_y)]\\
 x'&=\cos\alpha(x-0.5)-\sin\alpha(y-0.5)+0.5\\
 y'&=\sin\alpha(x-0.5)+\cos\alpha(y-0.5)+0.5 \\
 f &=32e^{-4[(x-0.25)^2+(y-0.25)^2]}
\end{align*}
The data set has $10,000$ samples: frequencies $k_x,k_y \in (0.5,4)$, 
phase shifts $a_x,a_y\in (0, 0.5)$, and rotation angle $\alpha \in (0,\pi/2)$,
generated using FEniCS \cite{alnaes2015archive} and DOLFIN \cite{10.1145/1731022.1731030}.
The goal is to train NNs to learn the solution $u$ 
%or the flux of heat: $\phi=-\kappa\nabla u$ 
from $\kappa$, $f$, and the mesh point coordinates $(x,y)$.
So, the dimension of the input is $3\times32\times32$ and the dimension of the output is $32\times32$.
Figure \ref{fig:poisson} shows an example of an exact solution $u$ and the 
corresponding solution learned from NN.
%Figure \ref{fig:poisson_learn} shows an exact solution $u$ and the 
%solution learned from NN.

\begin{figure}[htp]
    \centering
    \subfloat[$\kappa$]{
    \includegraphics[width=0.3\textwidth]{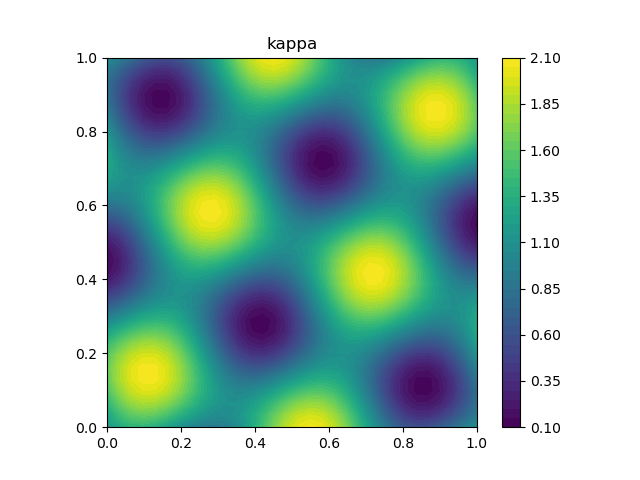}
    }
    \subfloat[exact $u$]{
    \includegraphics[width=0.3\textwidth]{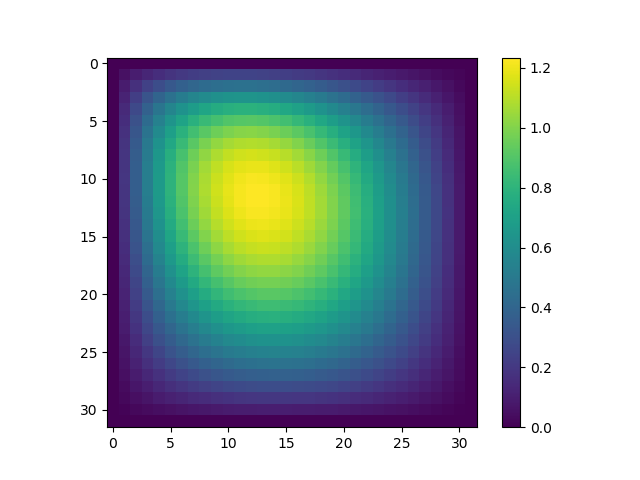}
    }
    \subfloat[learned $u$]{
    \includegraphics[width=0.3\textwidth]{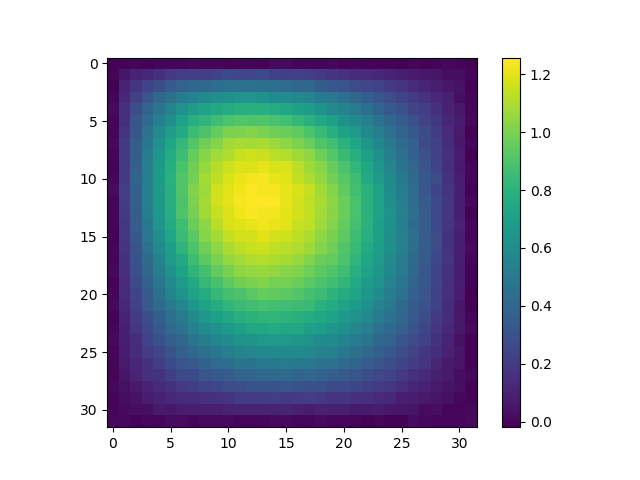}
    }
%    \subfloat[$u$]{
%    \includegraphics[width=0.23\textwidth]{}
%    }
%    \subfloat[$\phi=-\kappa\nabla_x u$]{
%    \includegraphics[width=0.23\textwidth]{images/flux_x.png}
%    }
%    \subfloat[$\phi=-\kappa\nabla_y u$]{
%    \includegraphics[width=0.23\textwidth]{images/flux_y.png}
%    }
    \caption{Diffusion coefficient and the solution} % and heat flux of the Poisson problem}
    \label{fig:poisson}
\end{figure}

%\begin{figure}[htp]
%    \centering
%    \subfloat[exact $u$]{
%    \includegraphics[width=0.3\textwidth]{images/u0_target.png}
%    }
%    \subfloat[learned $u$]{
%    \includegraphics[width=0.3\textwidth]{images/u0_out.png}
%    }
%    \caption{Exact and learned solutions}
%    \label{fig:poisson_learn}
%\end{figure}

We tested two neural network architectures:
\begin{itemize}
    \item A fully-connected network with two hidden layers of 400 neurons each, with ReLU activation functions.
    \item A convolutional network with two convolutional layers of 200 channels each, kernel width 7 and stride 2, following by two fully-connected layers of 400 neurons each, with ReLU activation functions.
\end{itemize}

\begin{figure}
\centering
\subfloat[training $L_2$]{\includegraphics[width=.4\textwidth]{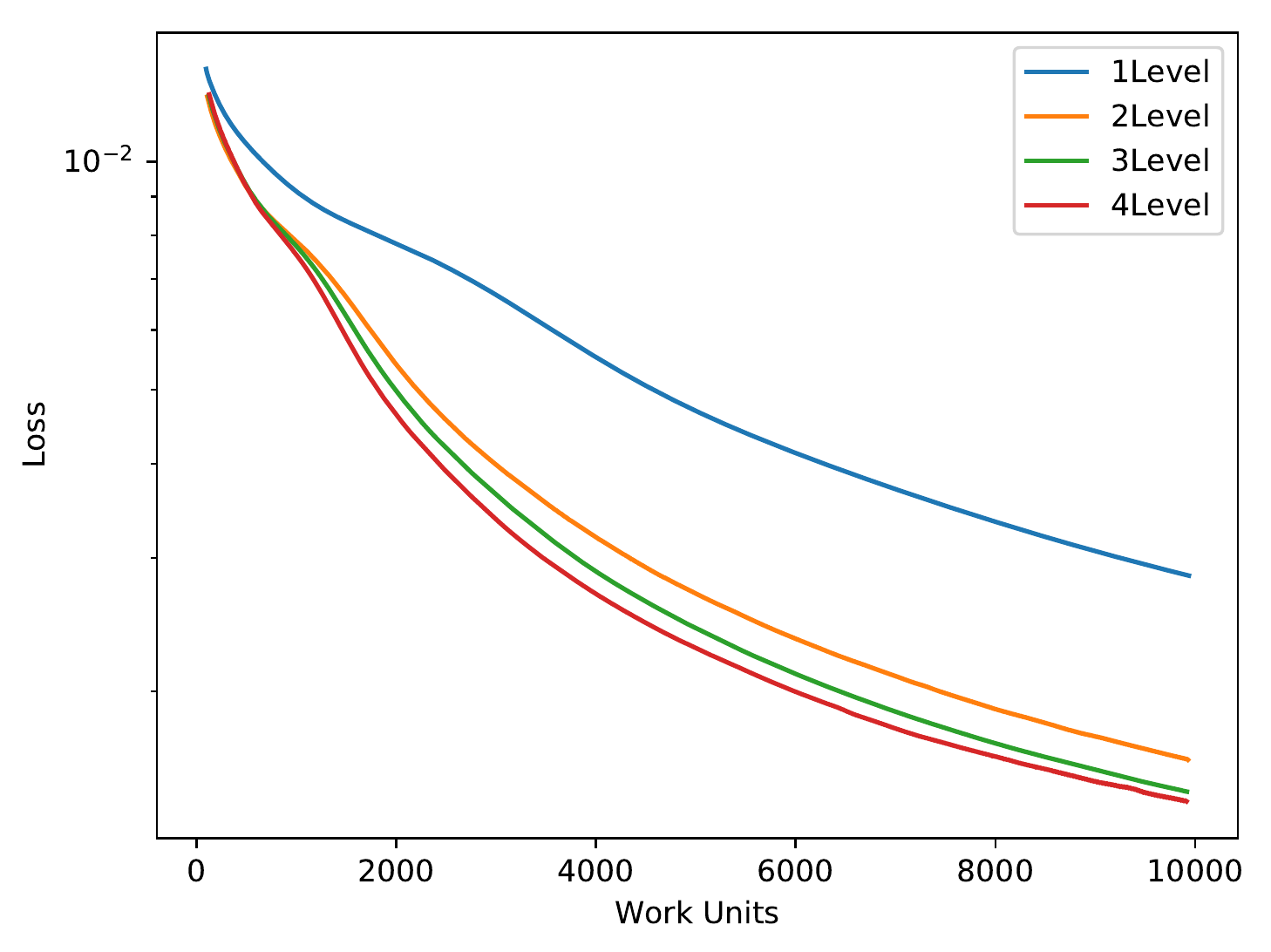}}%
\qquad
\subfloat[training $L_\infty$]{\includegraphics[width=.4\textwidth]{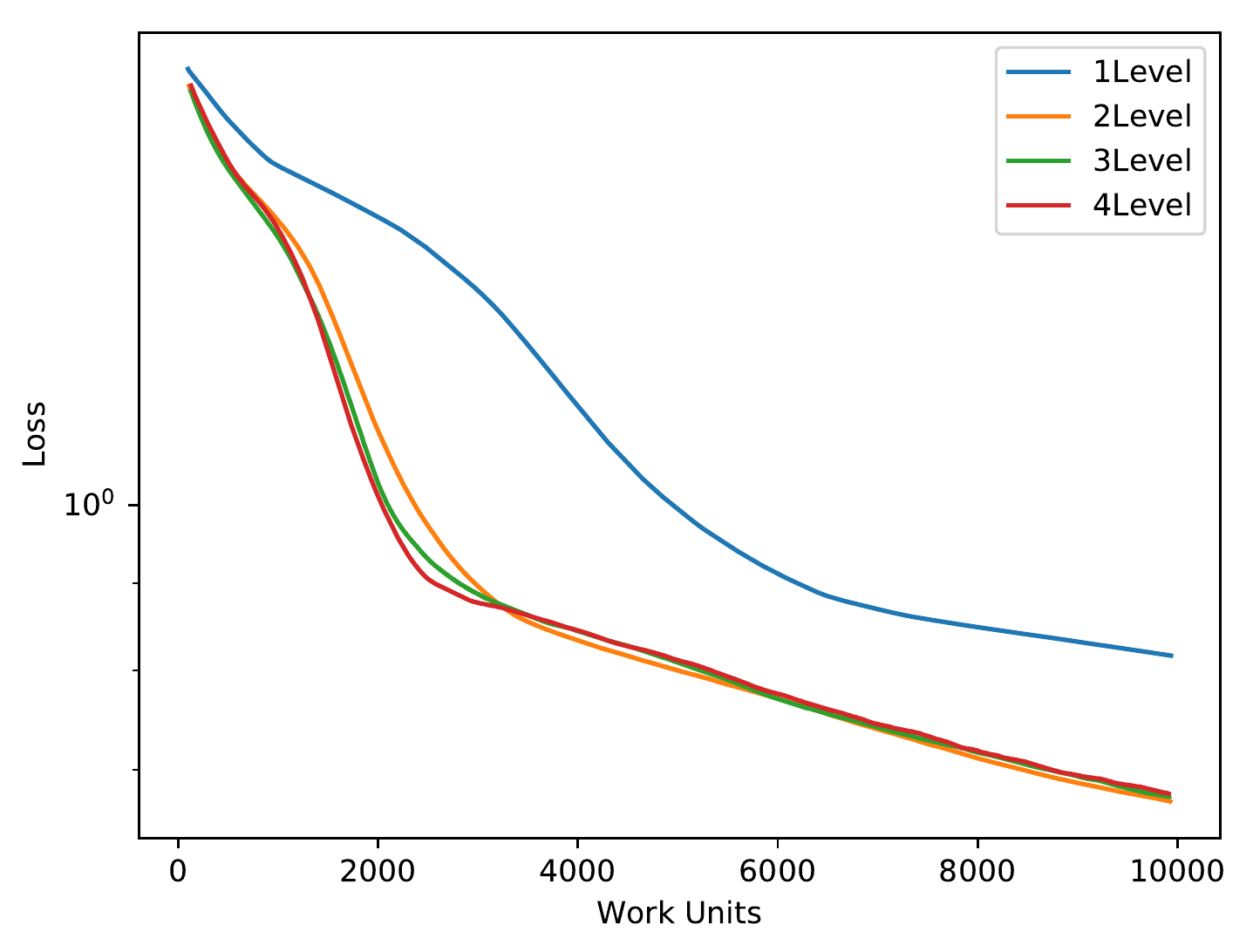}}%
\qquad
\subfloat[validation $L_2$]{\includegraphics[width=.4\textwidth]{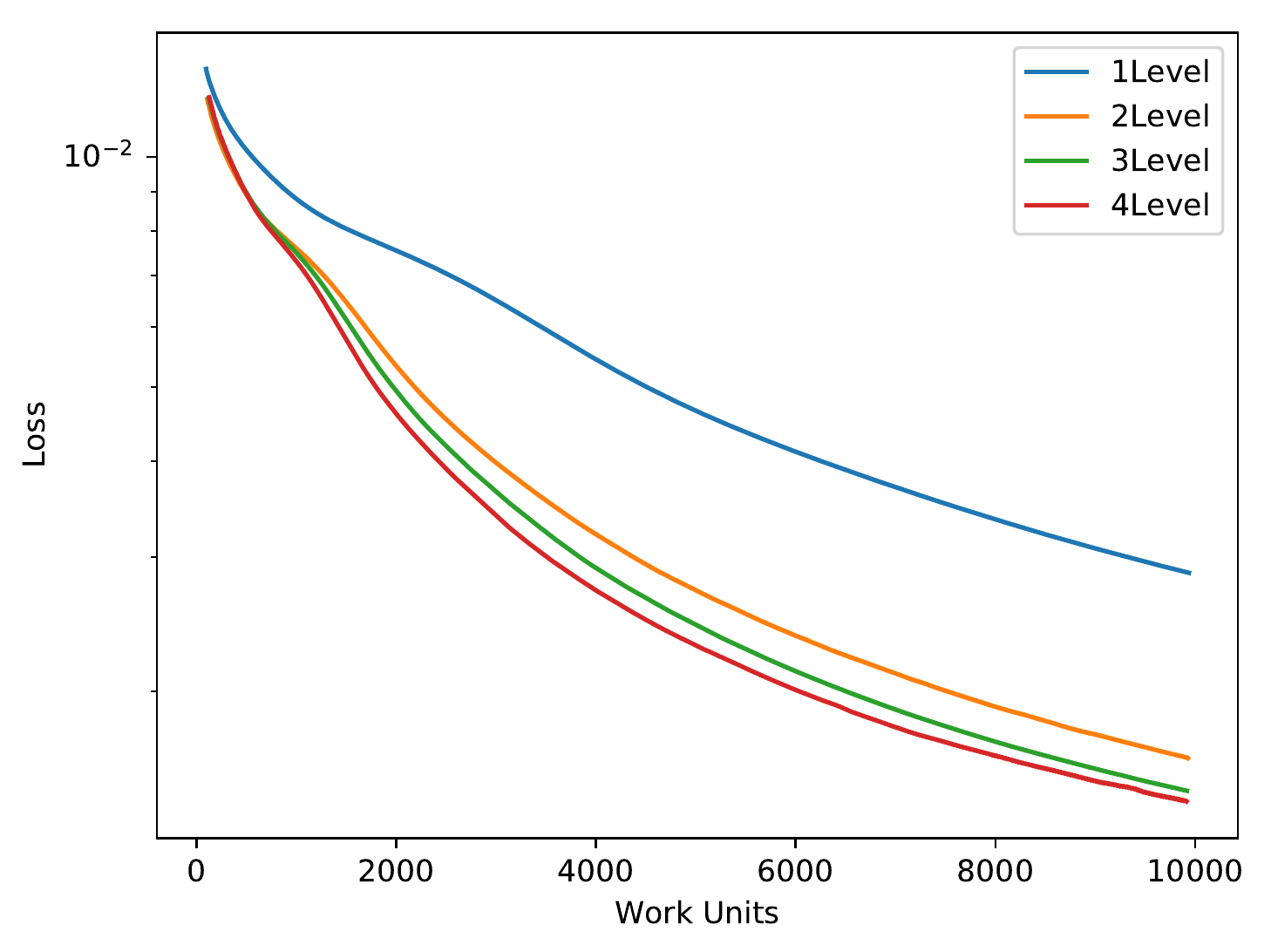}}%
\qquad
\subfloat[validation $L_\infty$]{\includegraphics[width=.42\textwidth]{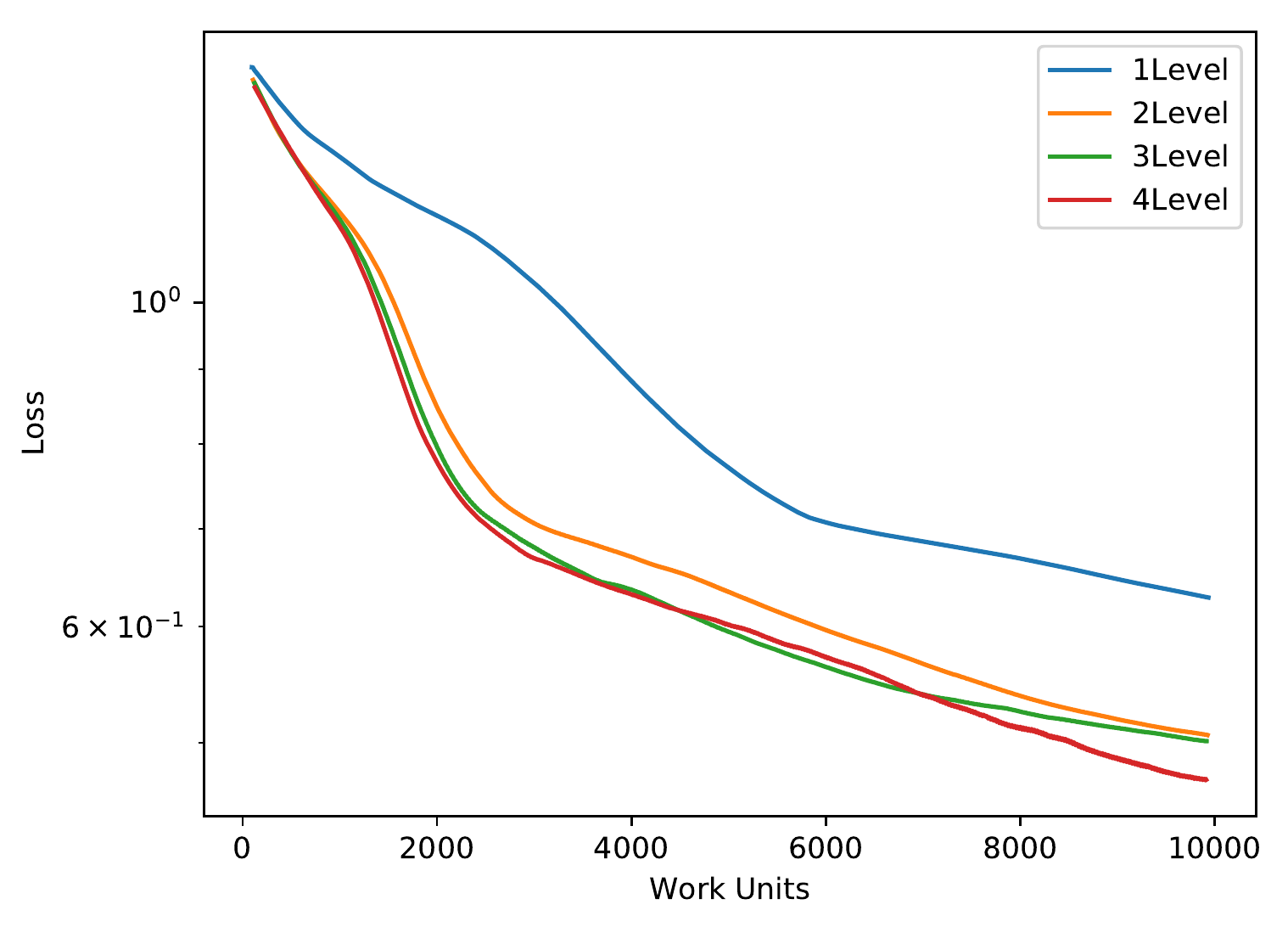}}%
\caption{(a) Smoothed training and validation loss ($L_2$ and $L_\infty$) for one-level SGD training   and multilevel FAS trainings (with 2-4 levels) for the Poisson problem using fully-connected NNs.}
\label{Poisson_results_fc}
\end{figure}

\begin{figure}
\centering
\subfloat[training $L_2$]{\includegraphics[width=.4\textwidth]{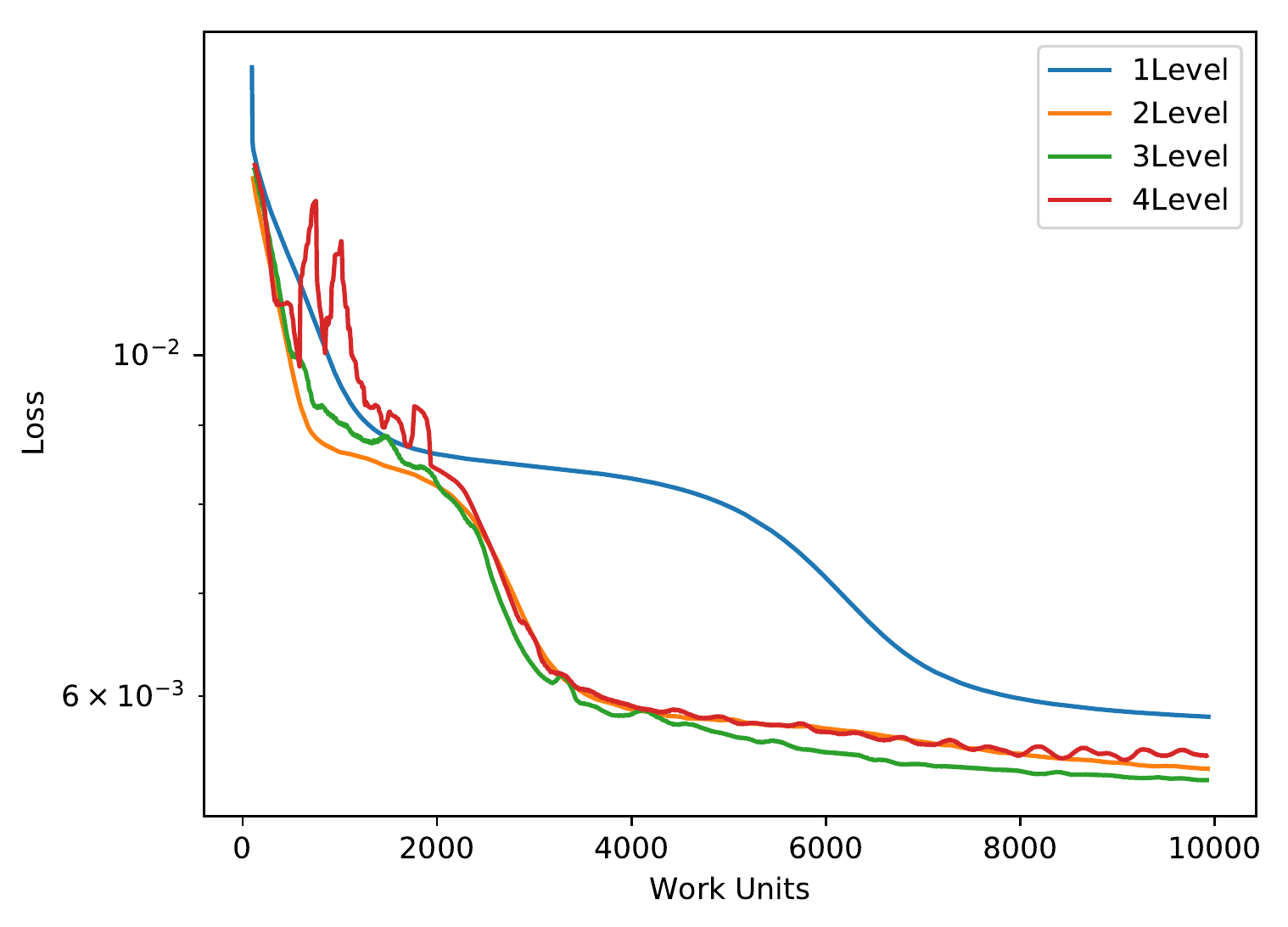}}%
\qquad
\subfloat[training $L_\infty$]{\includegraphics[width=.38\textwidth]{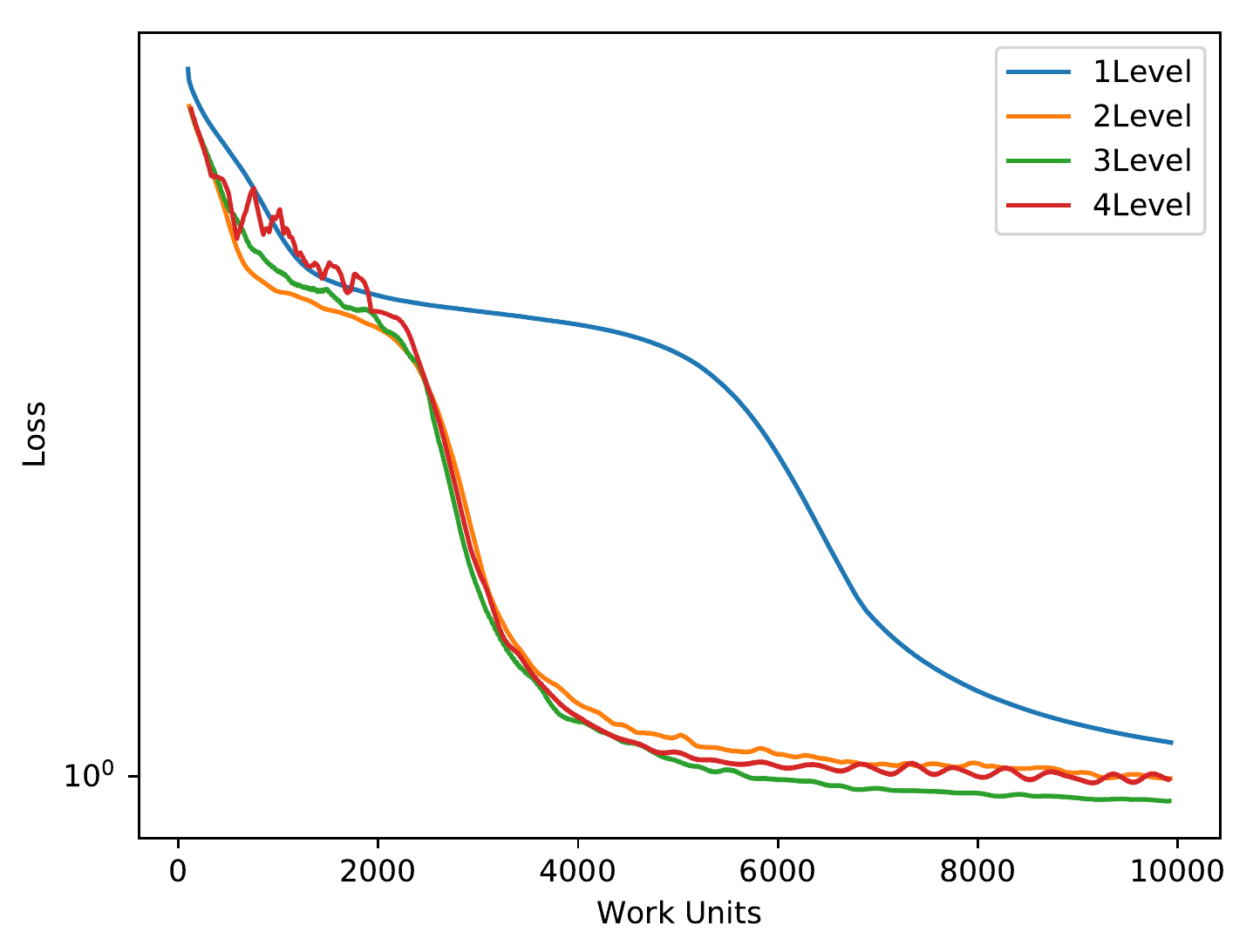}}%
\qquad
\subfloat[validation $L_2$]{\includegraphics[width=.4\textwidth]{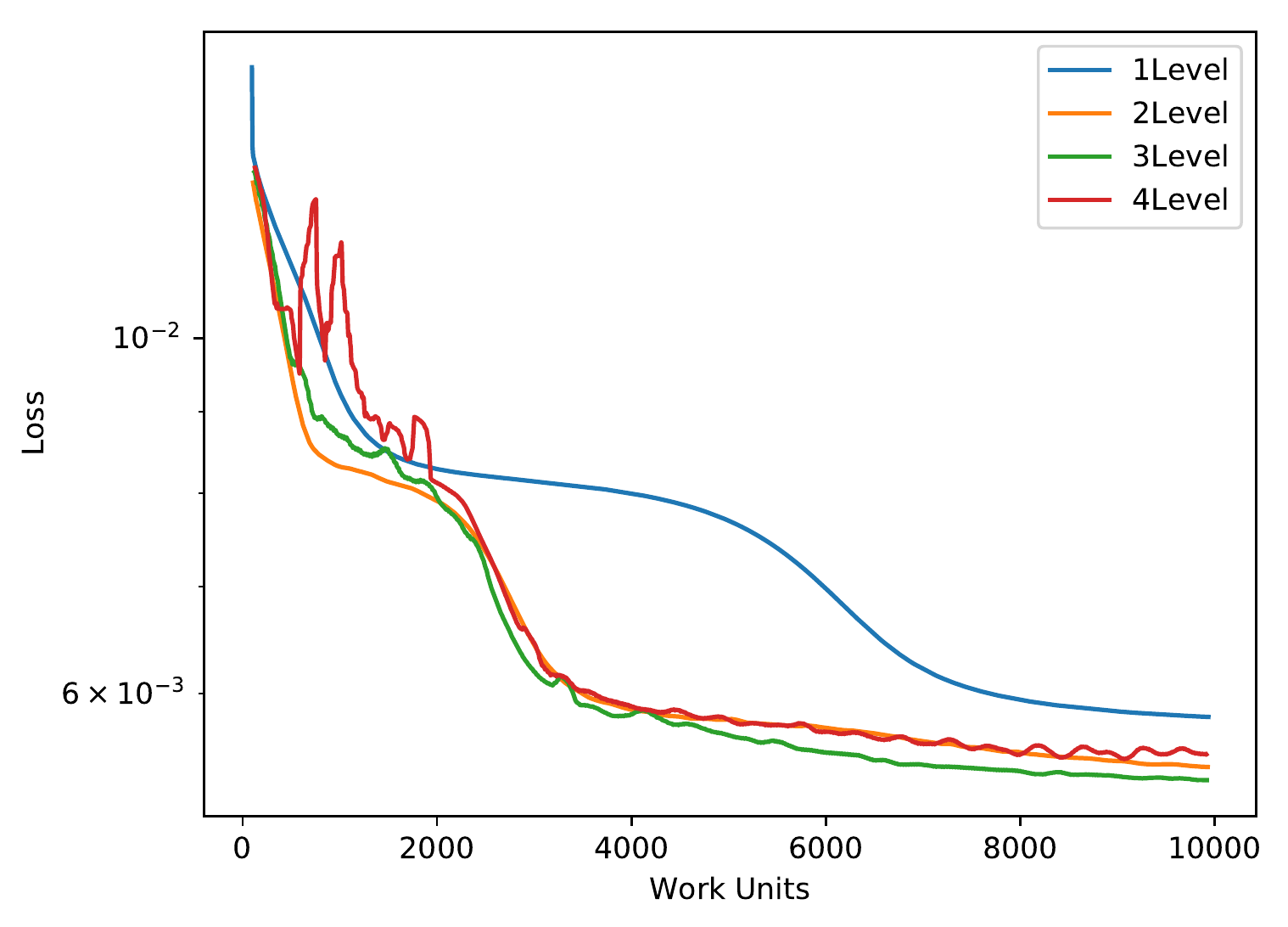}}%
\qquad
\subfloat[validation $L_\infty$]{\includegraphics[width=.4\textwidth]{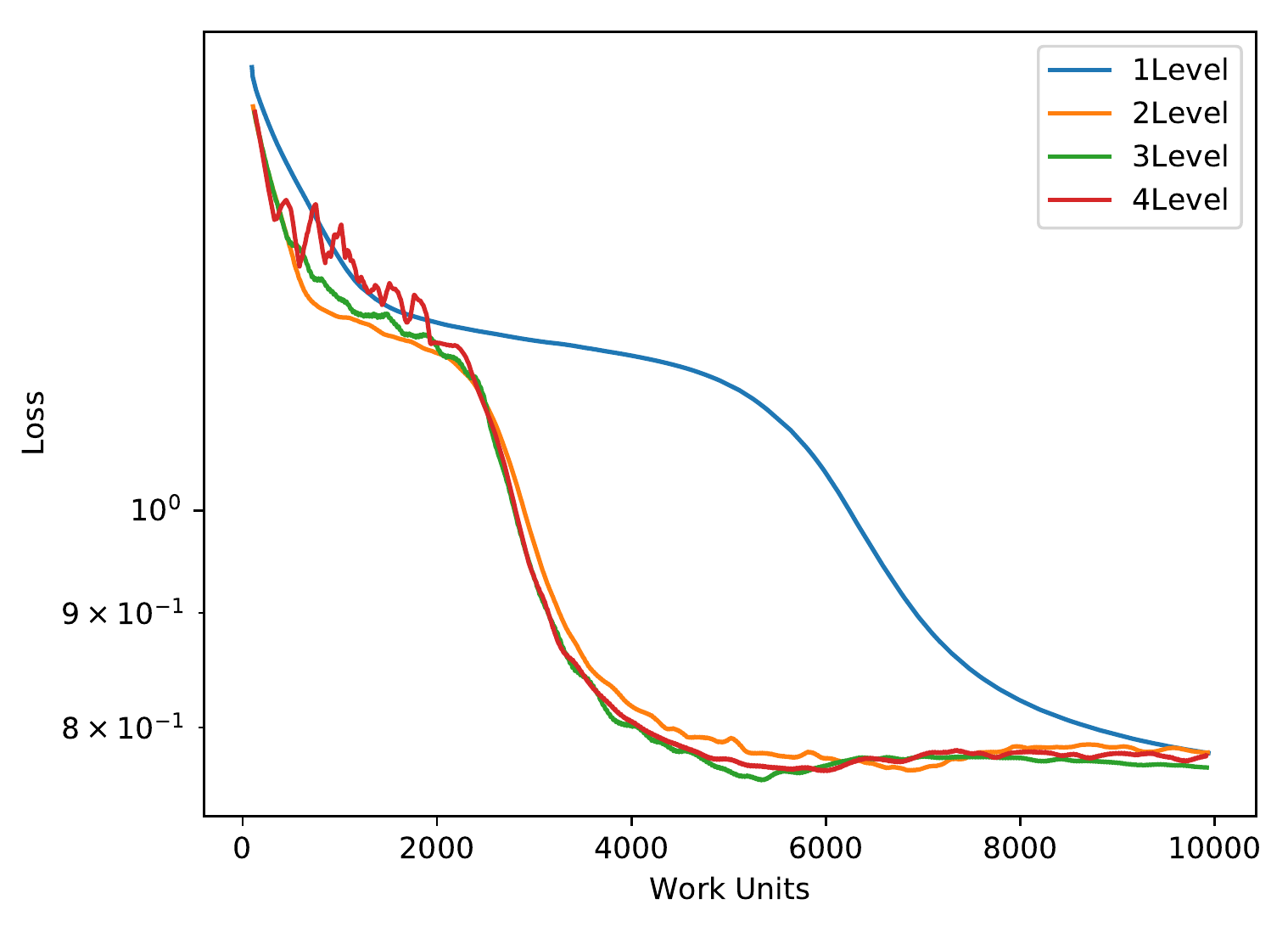}}%
\caption{(a) Smoothed training and validation loss ($L_2$ and $L_\infty$) for one-level SGD training   and multilevel FAS trainings (with 2-4 levels) for the Poisson problem using CNNs.}
\label{Poisson_results_cnn}
\end{figure}

We trained these neural networks using traditional (one-level) SGD and multilevel FAS with
SGD as the smoother (from 2 levels and up to 4 levels). In these cases, we used a learning rate of 0.01, momentum of 0.9, weight decay of 0, and minibatches of size 200.
For the multilevel FAS, we gradually decrease the learning rates for lower levels and
recompute the neuron matching for constructing the $P$ and $\Pi$ operators every 50 V-cycles.  We perform 4 SGD minibatch iterations at each smoothing step.
The results of validation and training losses are shown in Figure~\ref{Poisson_results_fc} for the fully-connected network and Figure~\ref{Poisson_results_cnn} for the convolutional network.
We also show the best losses achieved for each network with each method in Table~\ref{tab:best poisson losses} up to the first $10,000$ work units.

\begin{table}[h!]
\centering
\begin{tabular}{r|r|c|c|c|c|}
                                   &               & \multicolumn{2}{c|}{Validation Loss} & \multicolumn{2}{c|}{Training Loss} \\
                                   \hline
                                   &               & $L_2$       & $L_\infty$              &   $L_2$       & $L_\infty$          \\
                                   \hline
\multirow{2}{*}{Fully-Connected}  & One-Level SGD &  2.94e-3     &  0.626                  & {2.84e-3}       & 0.814              \\
                                  & Two-Level FAS &  \textbf{1.61e-3}     &  \textbf{0.496}                  & \textbf{1.59e-3}       & \textbf{0.664}              \\
                                  \hline
\multirow{2}{*}{Convolutional}   & One-Level SGD  & 5.79e-3      & 0.772                   & {5.81e-3}       & {1.022}              \\
                                 & Two-Level FAS  & \textbf{5.38e-3}      & \textbf{0.756}                   & \textbf{5.37e-3}       & \textbf{0.991}
\end{tabular}
\caption{Best $L_2$ and $L_\infty$ losses achieved for fully-connected and convolutional networks using both SGD and two-level FAS. We can see that the the fully-connected network outperforms convolutional networks, and the two-level FAS consistently outperforms SGD.}
    \label{tab:best poisson losses}
\end{table}

For the fully-connected NNs, we see that the multilevel FAS training methods have significantly faster convergence rates with respect to work units than the 1-level SGD in terms of both the $L_2$ and $L_\infty$ errors. 
With the 3- and 4-level methods, the speed of the convergence can be further improved slightly over the 2-level method.
Similar behavior can also be observed with CNNs, where multilvel FAS
methods yield superior convergence than SGD. We note here that  the 
4-level method appears to struggle with stability at the beginning of 
the training process. So, in this experiment,
we did not further increase the number of levels. 
Improving the stability of the proposed approach with higher number of
levels is left for our future work.

\subsection{Darcy Flow Problem} %Colin 

\begin{figure}
  \centering
  \includegraphics[width=6cm]{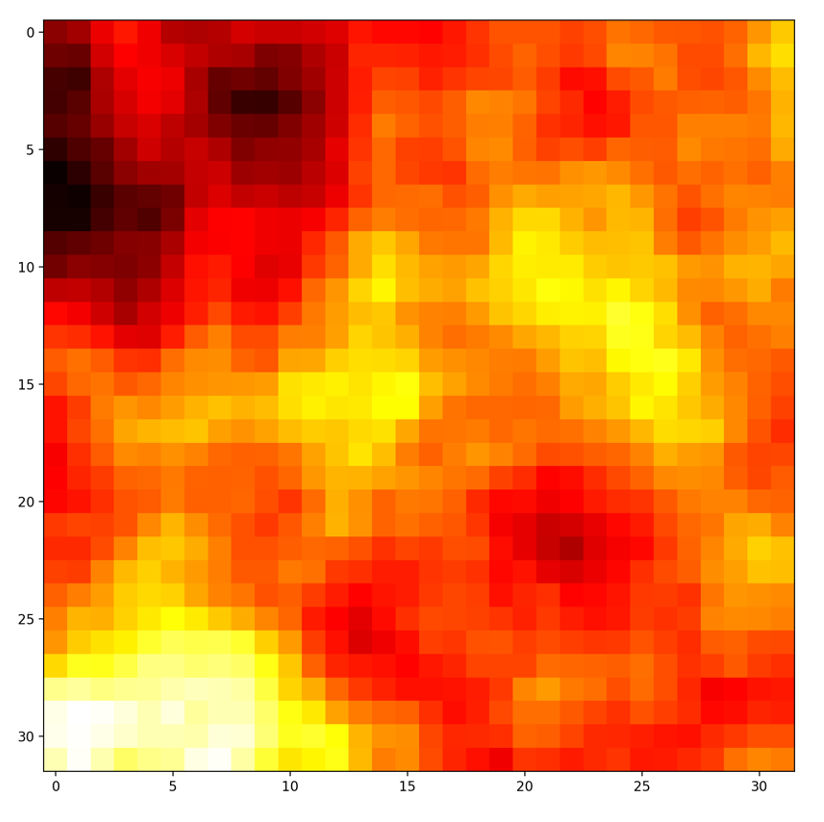}
  \caption{A spatially-correlated permeability field used as input to a Darcy problem.}
  \label{darcy}
\end{figure}

The test data correspond to the finite element solution of the Darcy equation. More specifically, for a single phase porous media flow, we compute the flux $\bq$ and the pressure $p$ by solving 
the following mixed system posed on the unit square $D$:  Given two  (Raviart-Thomas) finite element spaces, $\bV_h \subset \bH(\div,\;D)$ and $W_h \subset L_2(D)$,
find $\bq \in \bV_h$ and $p \in W_h$, such that 
\begin{equation}\label{mixed Darcy system}
\begin{array}{lll}
(k^{-1} \bq,\;\bs) - (\div \bs,\;p) & = (\bbf,\;\bs), &\text{ for all }\bs \in {\bV}_h,\\
(\div \bq,\;w) & = 0,& \text{ for all } w \in W_h,
\end{array}
\end{equation}
subject to Dirichlet boundary conditions $p = p_D$ on $\Gamma_D$ enforced by the right-hand side $\bbf$, and Neumann boundary condition $\bq \cdot \bn = 0$ on $\Gamma_N$, where $\Gamma_D $ and $\Gamma_N$ are nonintersecting and cover the boundary  $\partial D$. 

For a given mesh $\T_h$, we have the following data:
\begin{itemize} 
\item 
The input is $k = \exp(u)$, where $u$ is a realization of log-normal permeability field with piecewise constant values over the elements of $\T_h$.
\item 
The output data is either the solution $p \in W_h$, also a piecewise constant quantity over the elements of $\T_h$, or a specific quantity of interest, QoI.
In our case, we choose as QoI, 
\begin{equation}\label{QoI}
Q = \frac{1}{|\Gamma_{out}|}\; \int\limits_{\Gamma_{out}} \bq \cdot \bn\; dS,
\end{equation}
representing the average flux across the outflow boundary $\Gamma_{out}$, where $\bn$ is outer unit vector normal to $\Gamma_{out} \subset \partial D$ \cite{fairbanks}.
\end{itemize}
This type of data $(u,\;Q)$ is used in Monte Carlo simulations to estimate the expected value of the quantity of interest. See Figure~\ref{darcy} for an example of a permeability field $u$.

\begin{figure}[h]
\centering
\subfloat{\includegraphics[width=.4\textwidth]{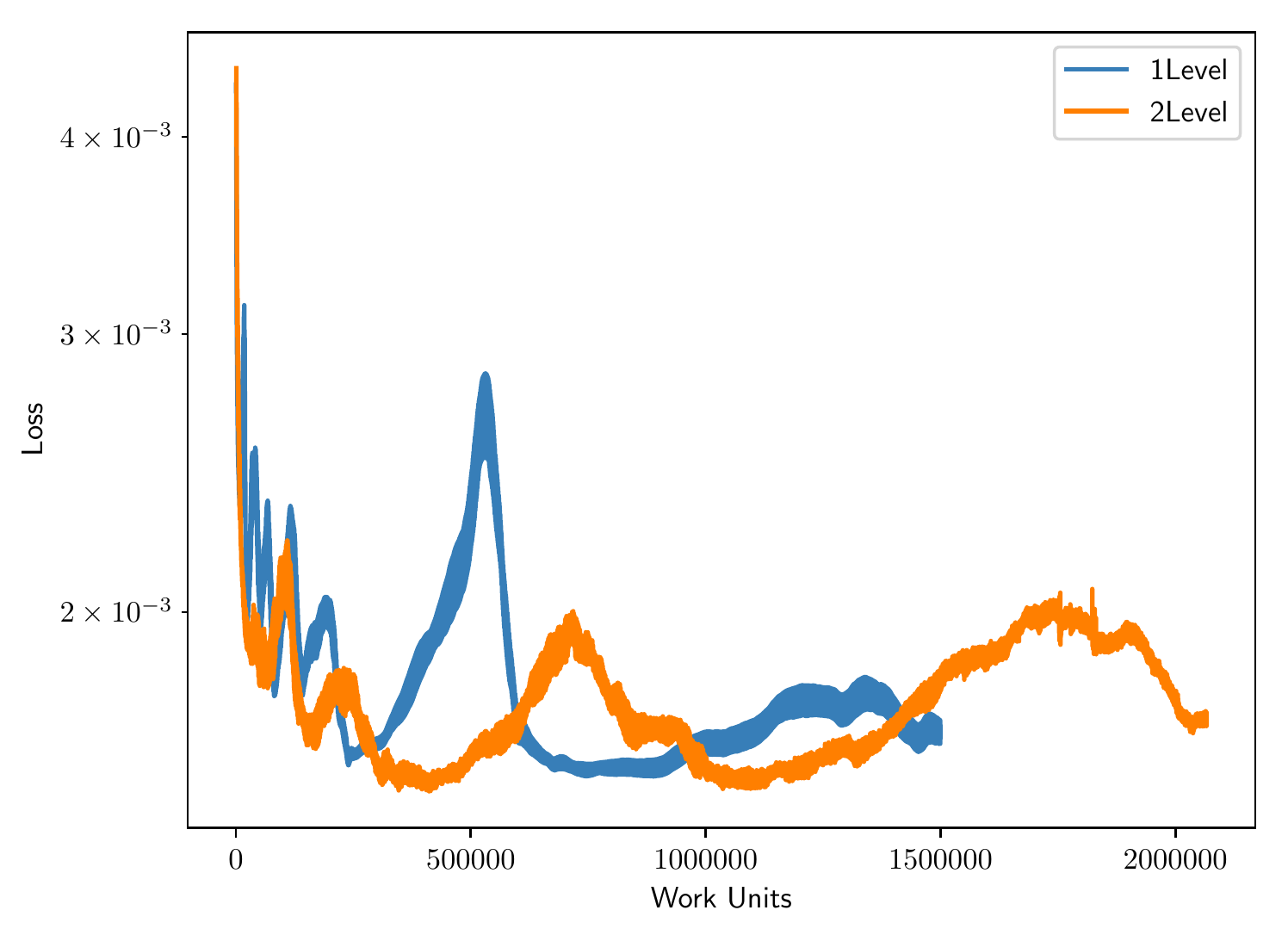}}%
\qquad
\subfloat{\includegraphics[width=.4\textwidth]{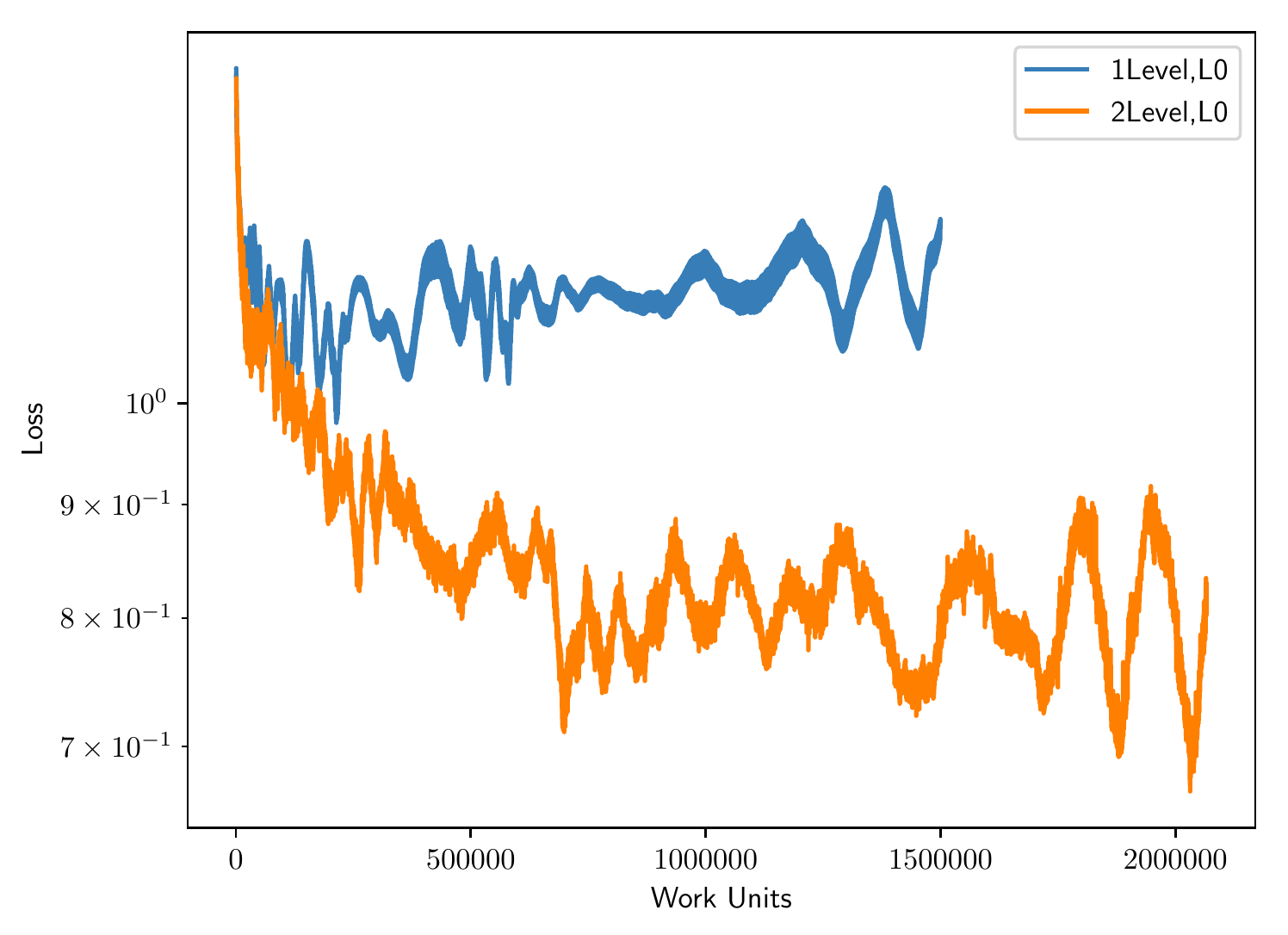}}%
\caption{(a) Smoothed $L_2$ loss and (b) smoothed $L_\infty$ loss for one-level training (i.e. SGD) and two-level FAS for the Darcy problem using a fully-connected network.}
\label{darcy_results_fc}
\end{figure}

\subsubsection{2-Level Without Tuning}

\begin{figure}[h]
\centering
\subfloat{\includegraphics[width=.4\textwidth]{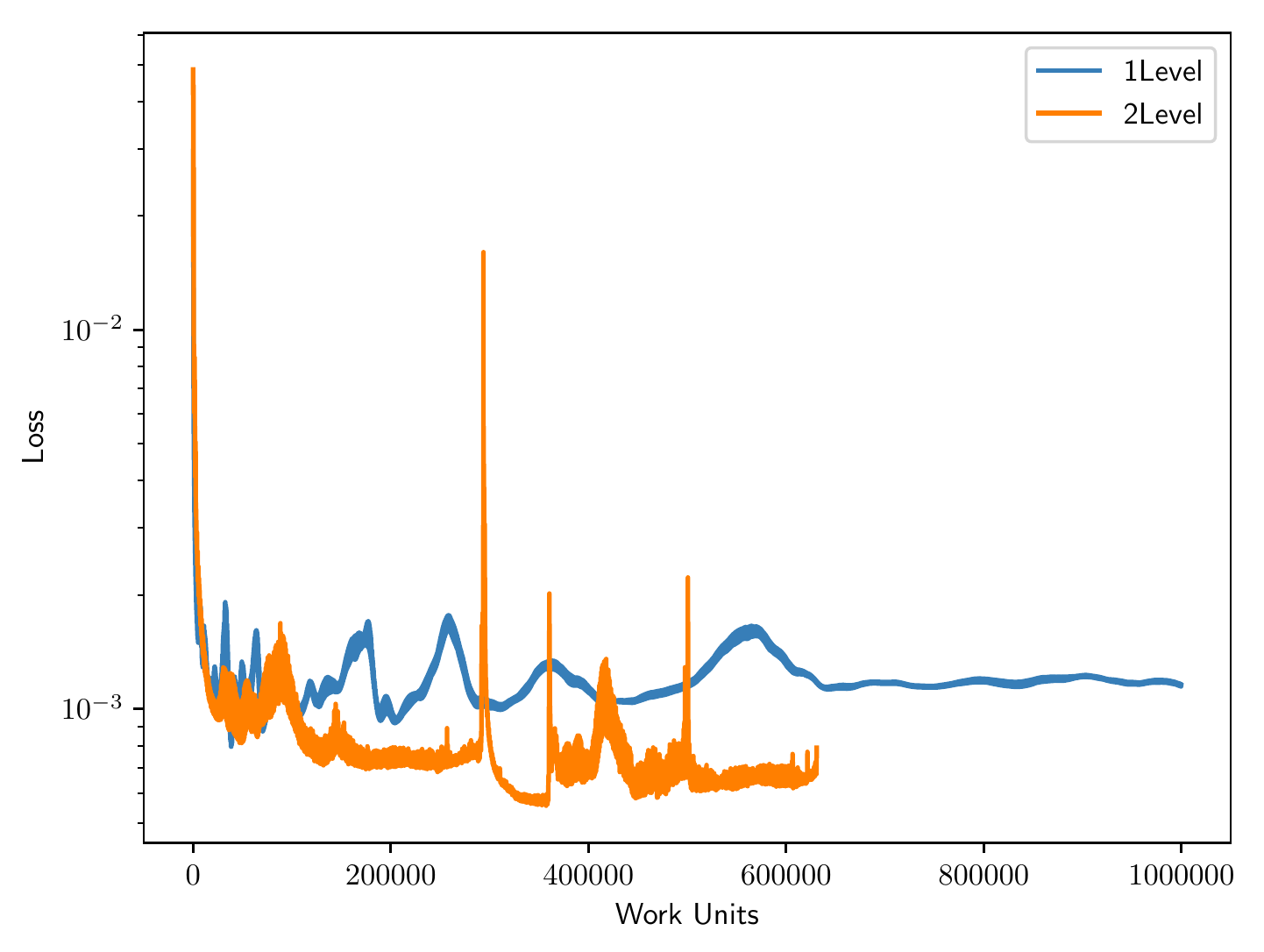}}%
\qquad
\subfloat{\includegraphics[width=.4\textwidth]{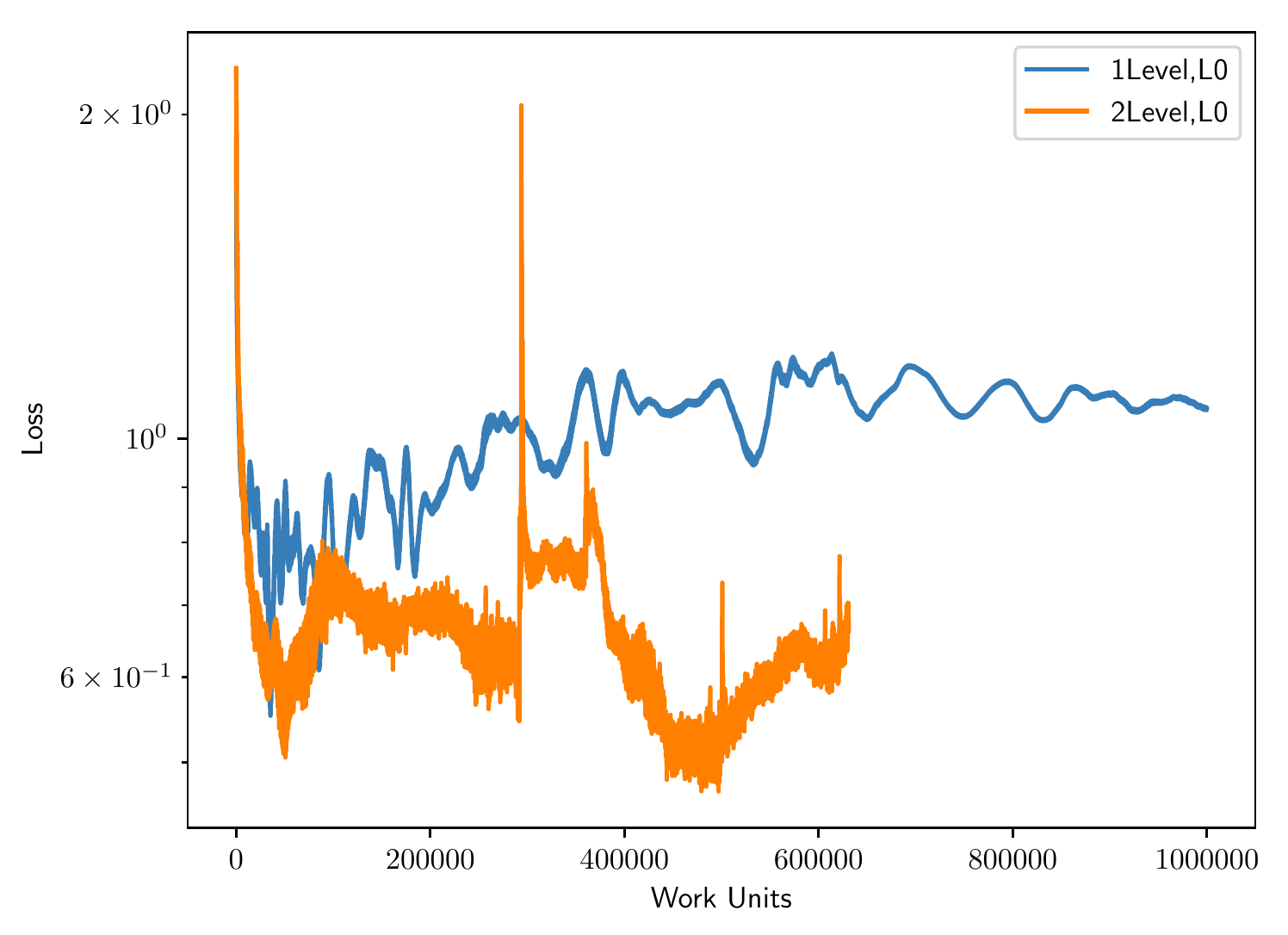}}%
\caption{(a) Smoothed $L_2$ loss and (b) smoothed $L_\infty$ loss for one-level training (i.e. SGD) and two-level FAS for the Darcy problem using a CNN.}
\label{darcy_results_cnn}
\end{figure}

We show the validation losses achieved for two types of networks:
\begin{itemize}
    \item A fully-connected network with 3 layers, each with 1024 neurons.
    \item A convolutional neural network with 3 convolutional layers. The channels, kernel width, and stride of these layers are (100, 200, 400), (11, 7, 3), and (2, 1, 1) respectively. Following these are two fully-connected layers each with 2048 neurons.
\end{itemize}
We trained these two networks using both traditional (one-level) SGD and our two-level FAS. In these cases we used a learning rate of 0.01, momentum of 0.9, weight decay of 1e-6, with minibatches of size 200. For two-level FAS, we recompute matchings with which to construct the $P$ and $\Pi$ operators every 10 V-cycles, and perform 2 SGD minibatch iterations at each smoothing step.

\begin{table}[h!]
\centering
\begin{tabular}{r|r|c|c|c|c|}
                                   &               & \multicolumn{2}{|c|}{Validation Loss} & \multicolumn{2}{|c|}{Training Loss} \\
                                   \hline
                                   &               & $L_2$       & $L_\infty$              &   $L_2$       & $L_\infty$          \\
                                   \hline
\multirow{2}{*}{Fully-Connected}  & One-Level SGD &  1.49e-3     &  0.707                  & \textbf{2.37e-5}       & 0.0546              \\
                                  & Two-Level FAS &  \textbf{1.39e-3}     &  \textbf{0.472}                  & 4.75e-5       & \textbf{0.0540}              \\
                                  \hline
\multirow{2}{*}{Convolutional}   & One-Level SGD  & 6.61e-4      & 0.361                   & \textbf{2.92e-5}       & \textbf{0.0469}              \\
                                 & Two-Level FAS  & \textbf{4.84e-4}      & \textbf{0.312}                   & 7.47e-5       & 0.0696
\end{tabular}
\caption{Best $L_2$ and $L_\infty$ losses achieved for fully-connected and convolutional networks using both SGD and two-level FAS. We can see that the CNN consistenly outperforms the fully-connected network, and the two-level FAS consistenly outperforms SGD.}
    \label{tab:best darcy losses}
\end{table}

The results can be seen in Figure~\ref{darcy_results_fc} for the fully-connected network and Figure~\ref{darcy_results_cnn} for the convolutional network using both traditional SGD and our two-level FAS. As stochastic training methods tend to produce high performance variation from step to step, raw plots of performance over many steps can be messy; to improve the visualization these plots are smoothed using a moving arithmetic mean with a window size of 33, chosen as the smallest window size we felt produced informative visuals.  We also show the best losses achieved for each network with each method in Table~\ref{tab:best darcy losses}. We can see from these results that, as with the Poisson problem, two-level FAS produces significant improvements over SGD.

\subsection{Deeper Hierarchies with Stability Tuning}

Here we investigate the use of deeper hierarchies. We find that the 3-level hierarchy tends to suffer from stability problems, while the 4-level and 5-level hierarchies are much more stable. Understanding this stability phenomena is the subject of future work.

In order to maintain stability, we make the following hyperparameter choices:
\begin{itemize}
    \item The momentum coarse-grid correction scaling parameter is set to $\alpha_m = 0.2$.
    \item The tau correction scaling parameter is set to $\gamma = 0.125$.
    \item The learning rate on each hierarchy level is divided by a factor of $\eta = 2 \sqrt{2}$ for the first 3 levels, after which the learning rate is kept the same as the third level.
\end{itemize}

\begin{figure}
\centering
\begin{tabular}{cc}
\subfloat{\includegraphics[width=.37\textwidth]{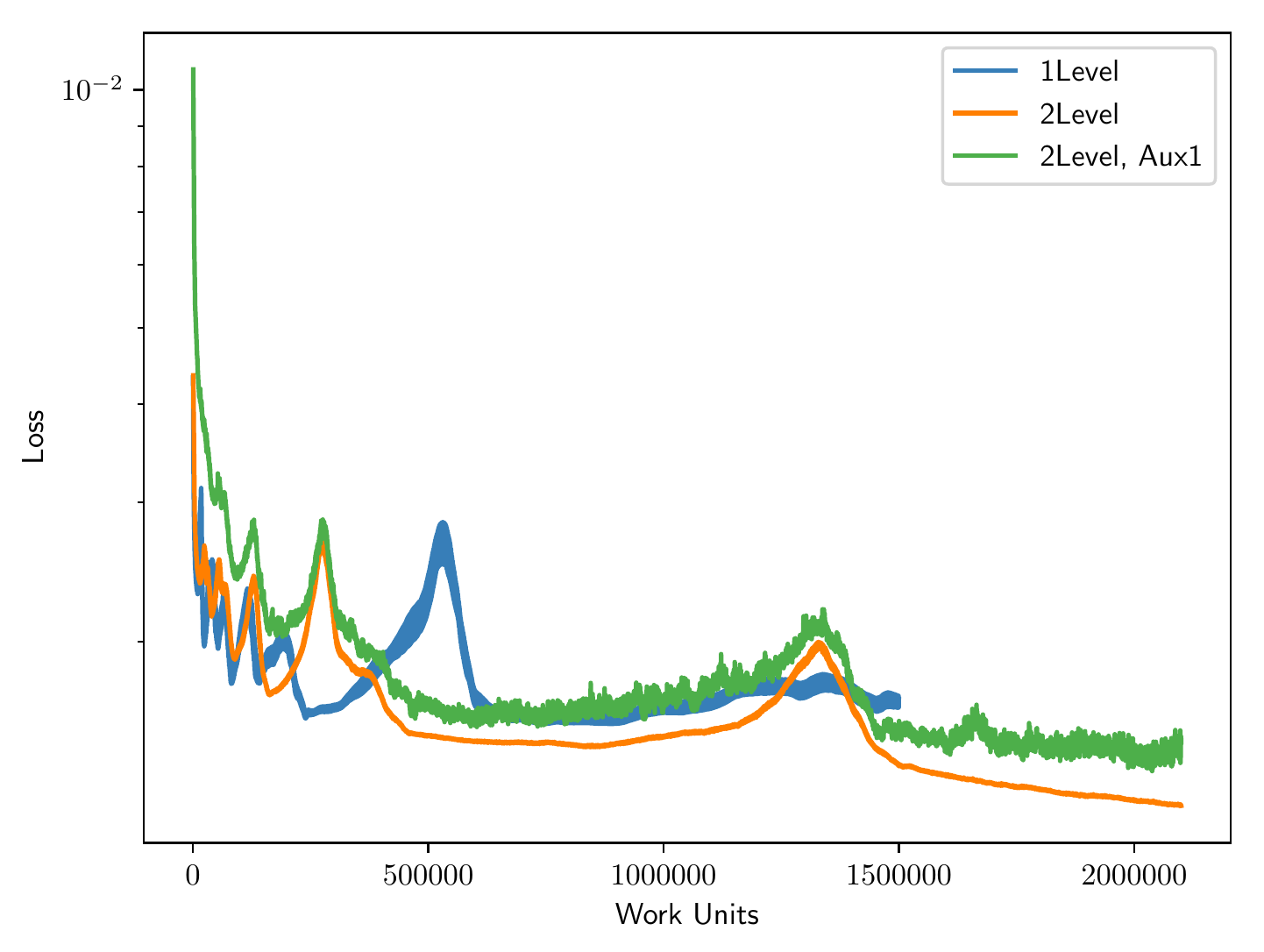}}%
&
\subfloat{\includegraphics[width=.37\textwidth]{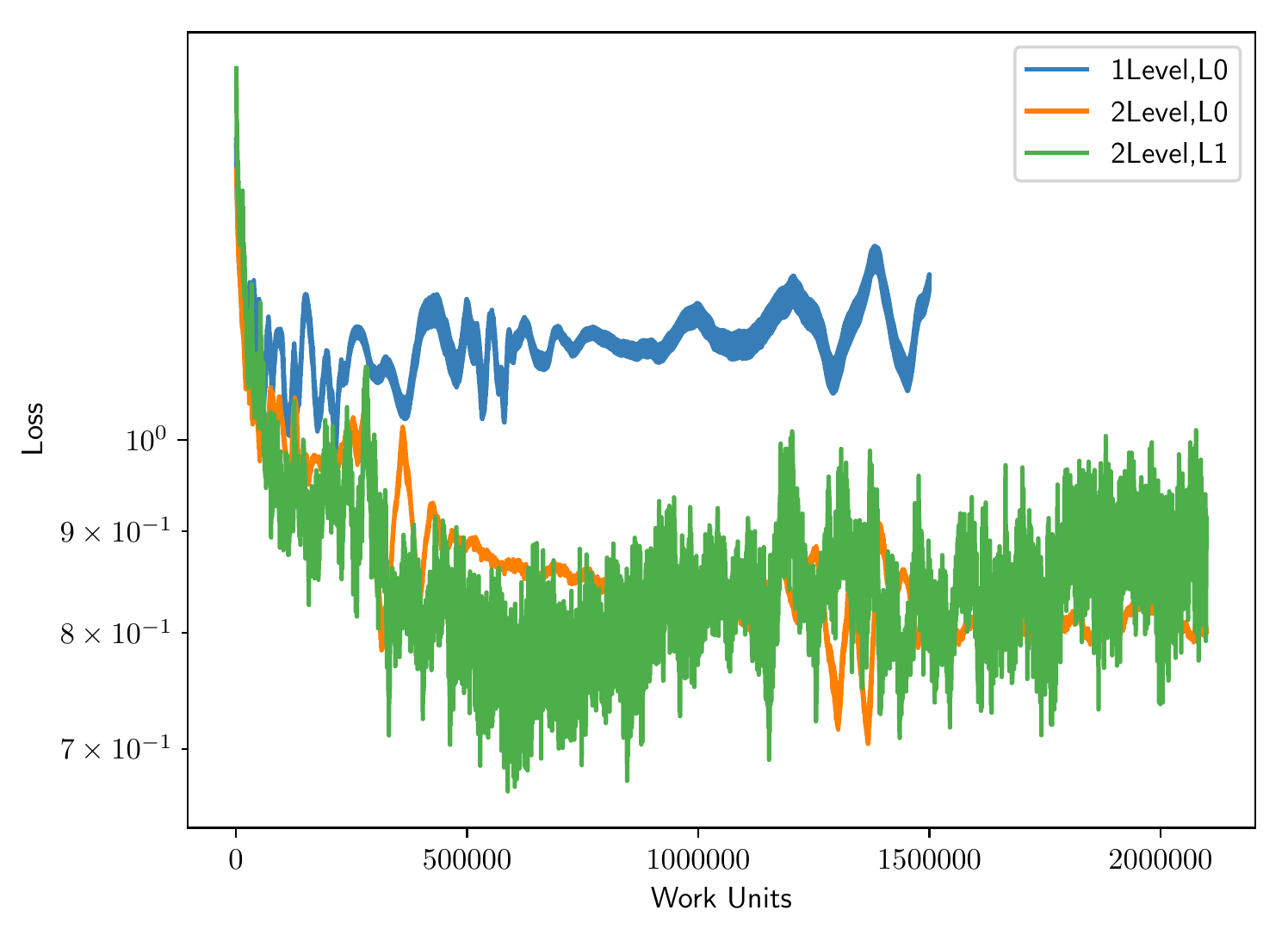}}%
\\
\subfloat{\includegraphics[width=.37\textwidth]{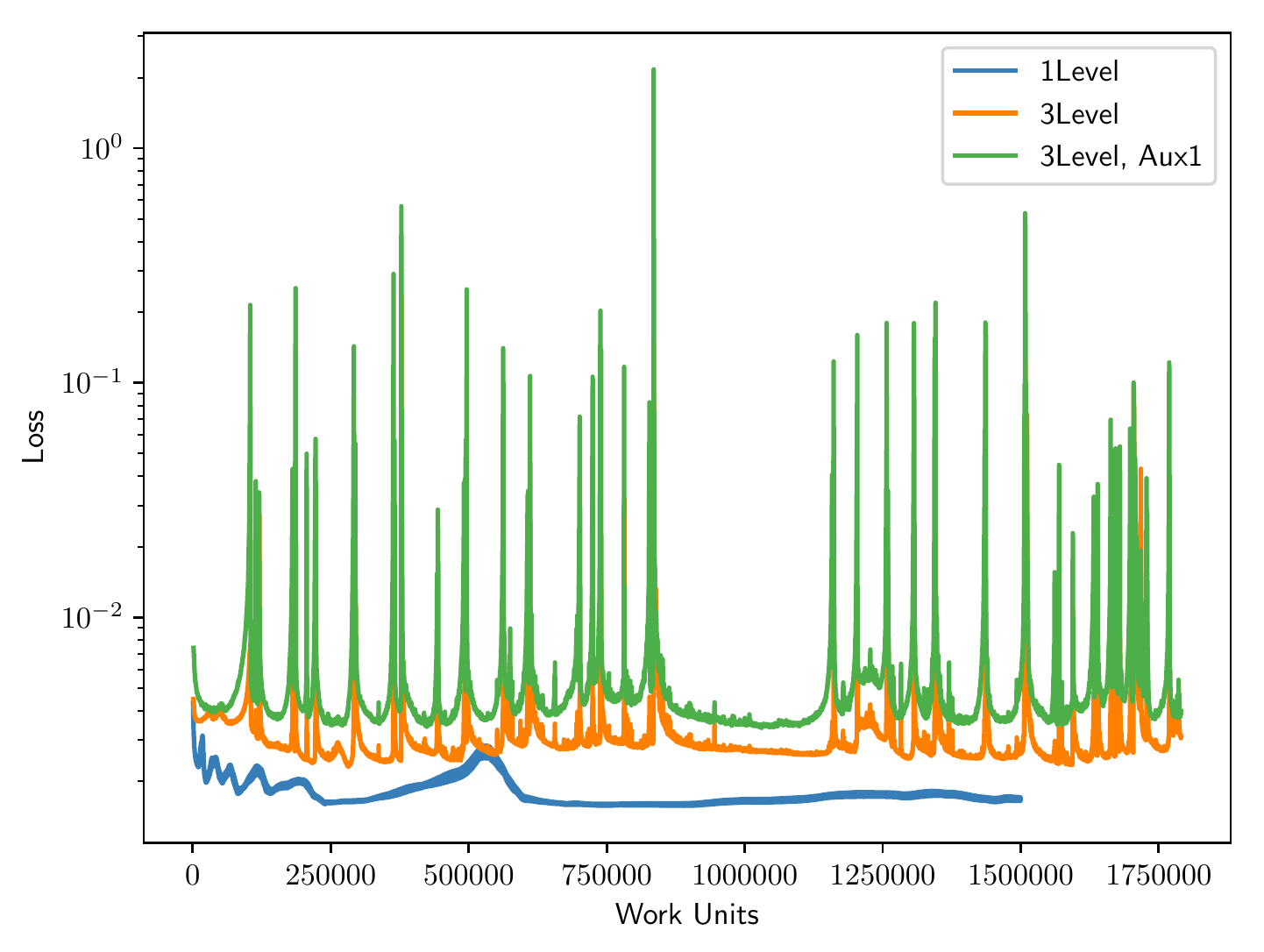}}%
&
\subfloat{\includegraphics[width=.37\textwidth]{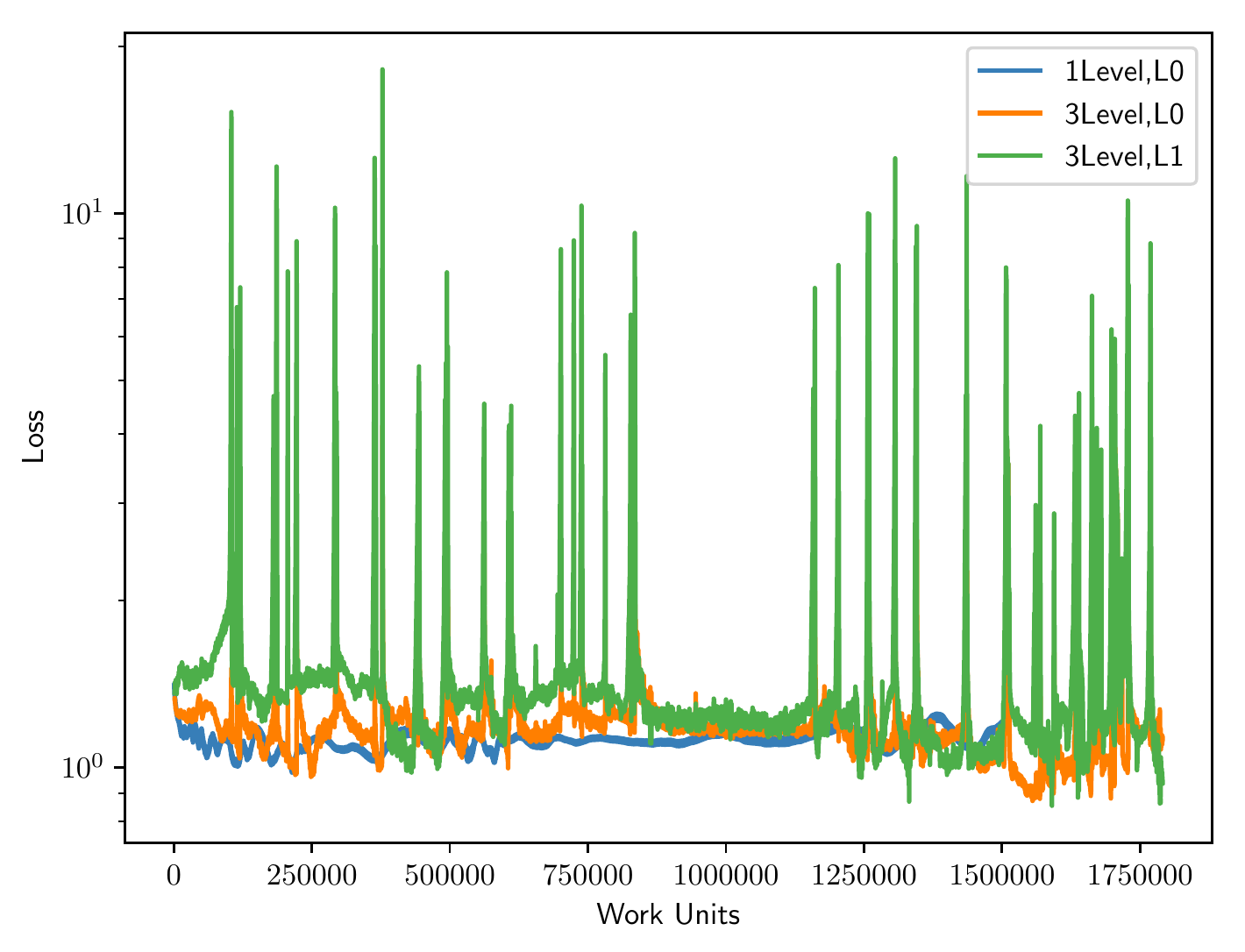}}%
\\
\subfloat{\includegraphics[width=.37\textwidth]{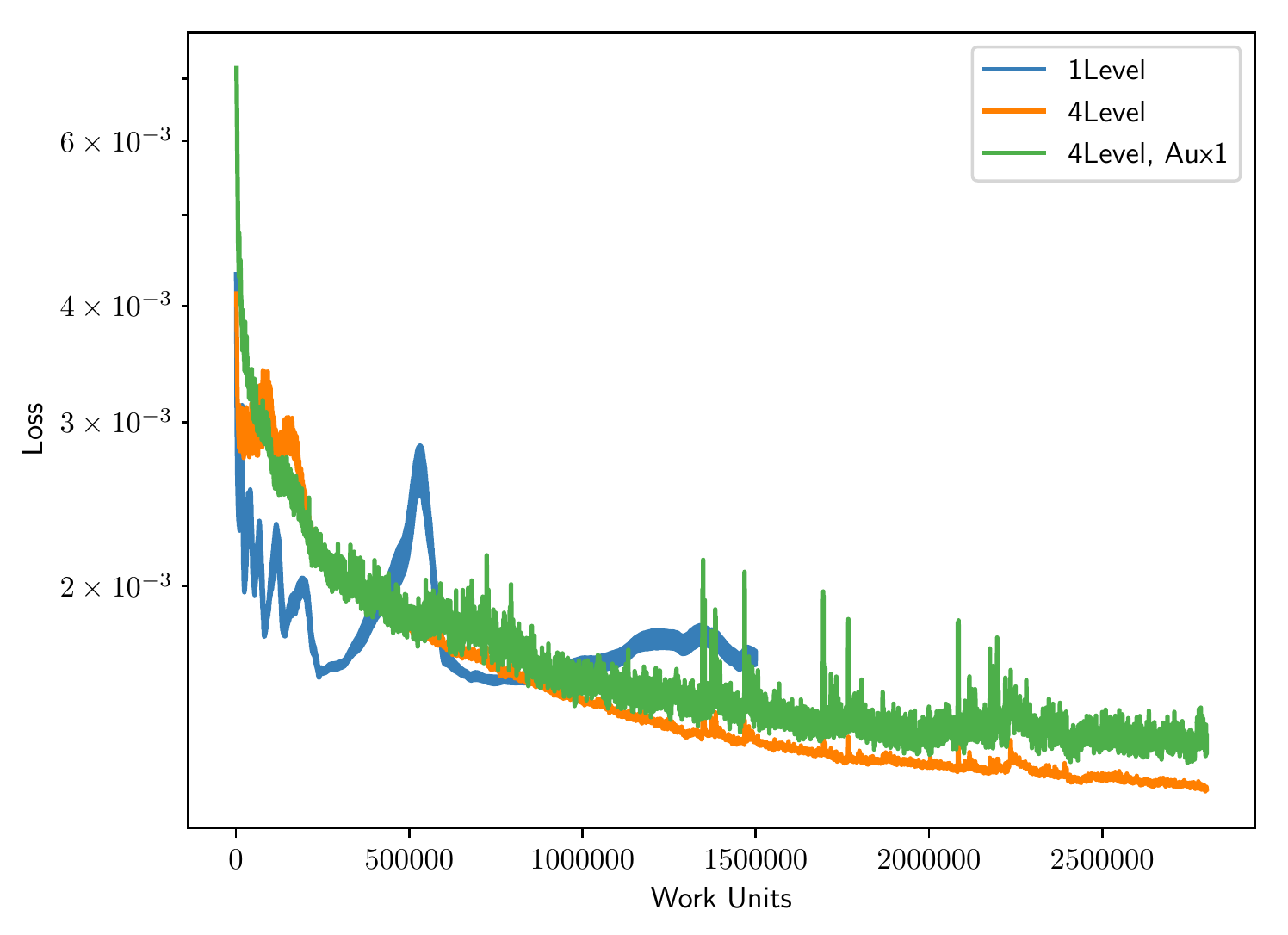}}%
&
\subfloat{\includegraphics[width=.37\textwidth]{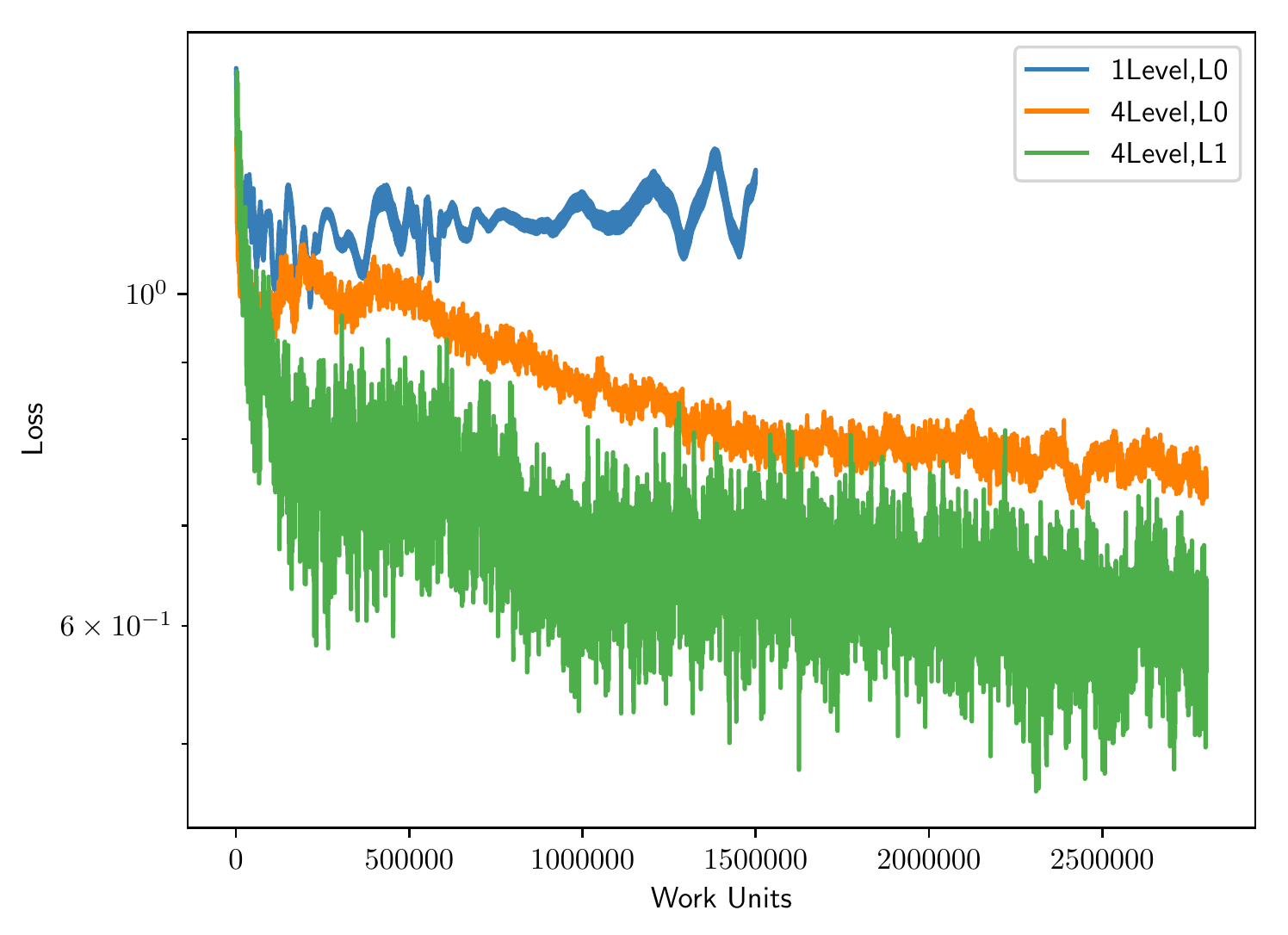}}%
\\
\subfloat{\includegraphics[width=.37\textwidth]{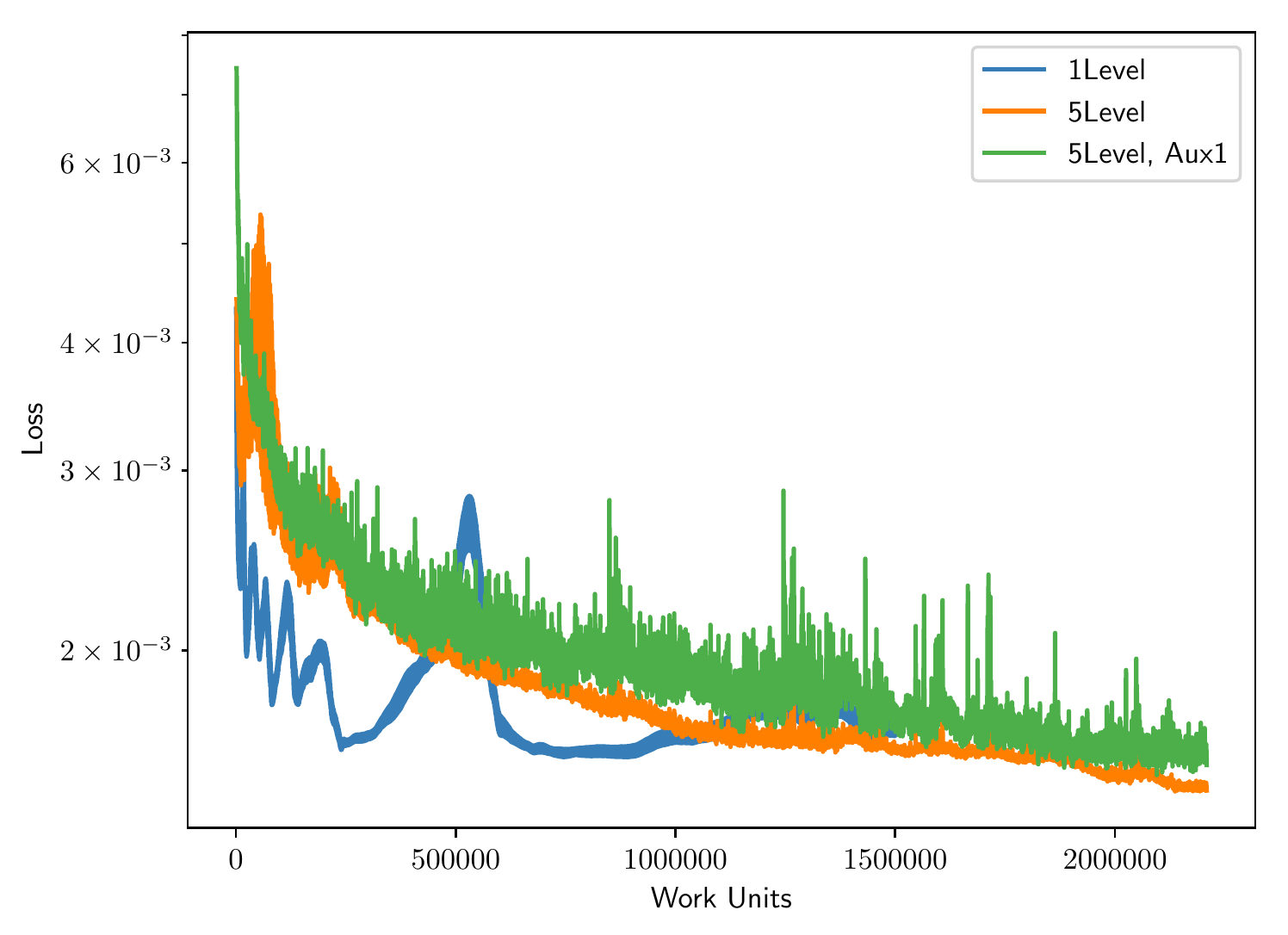}}%
&
\subfloat{\includegraphics[width=.37\textwidth]{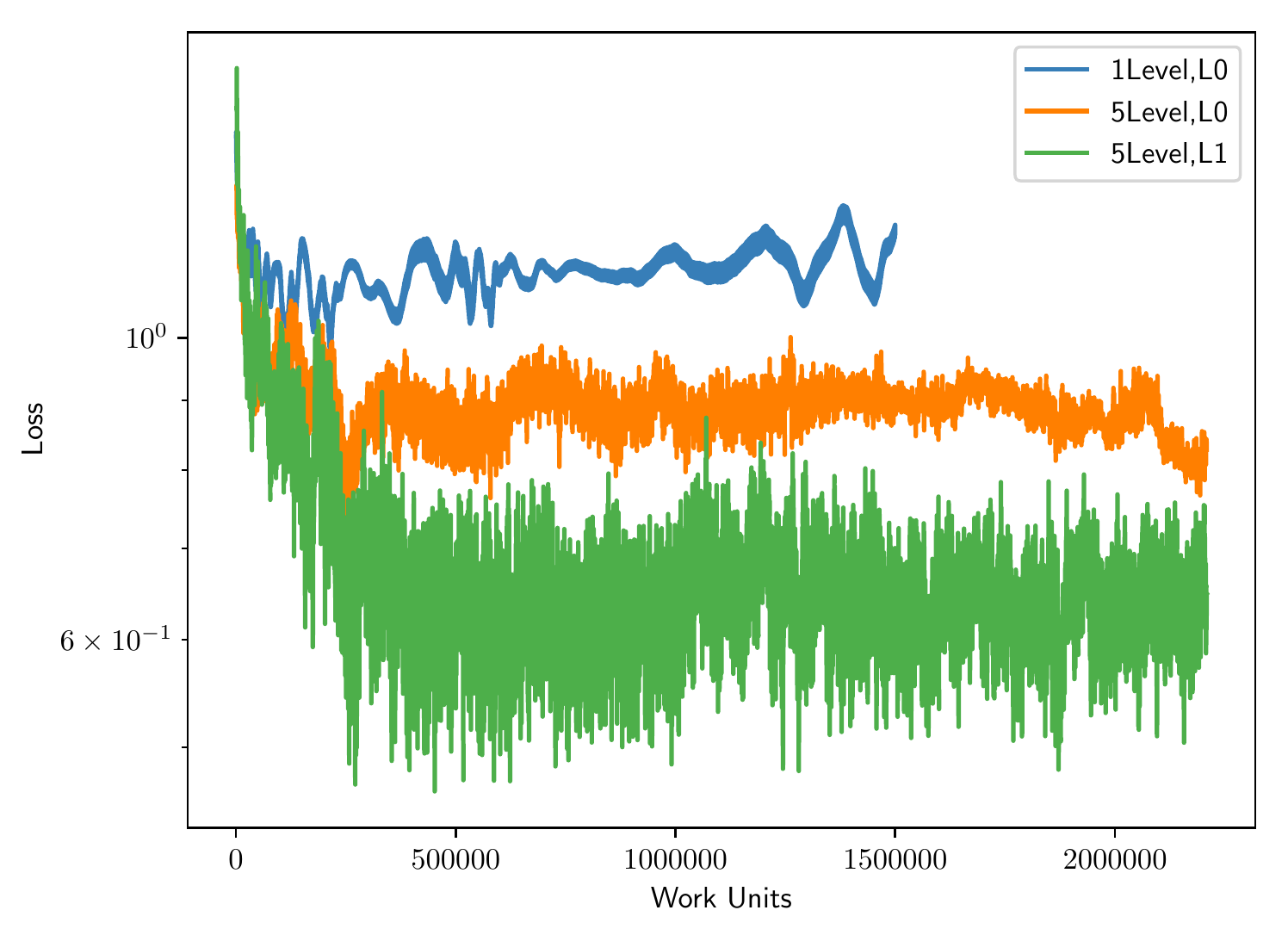}}%
\end{tabular}
\caption{$L_2$ loss (left) and $L_\infty$ validation loss (right) for fully-connected networks trained with 2, 3, 4, and 5 hierarchy levels. Each plot shows 1-level loss (the same in each plot), multilevel loss, and first auxiliary loss.}
\label{darcy_results_tuned}
\end{figure}

The results can be seen in Figure \ref{darcy_results_tuned}, where we show validation error from both the original neural network and its first-coarsened neural network. In each plot in this Figure, we also show the 1-level SGD results as a point of common comparison. We find that not only does the fine neural network perform well, but that the first auxiliary network (the first-coarsened network) often has strong performance as well. In fact, what one can observe in Figure \ref{darcy_results_tuned} is that the original neural network typically performs best in the L2 error, while the first auxiliary network performs best in the $L_\infty$ error. We believe this is because the first-coarsened network has roughly half as many neurons and therefore has less opportunity for overfitting. We have found that this strong auxiliary performance only occurs when the tau scaling parameter $\gamma$ is sufficiently small. We also find that the 3-level hierarchy appears unusually unstable; this is not fully understood, and deserves further investigation.

We also show the best losses seen, for both validation and training loss across different hierarchy depths with both fully-connected and convolutional networks, in Table \ref{tab:deep darcy losses}. These results demonstrate the regularization impact of multilevel training. When dealing with training loss, either the one-level method or a shallower hierarchy produces the best losses. But when considering validation loss, which is the stronger metric of generalization performance of the network, the deeper hierarchies produce significantly stronger performance. This is especially true when considering $L_\infty$ loss; this is consistent with the notion of regularization as a mechanism to prevent inappropriate corner-case behavior.

\begin{table}
\centering
\begin{tabular}{r|l|l|l|l||l|l|l|l|}
                                                  & \multicolumn{4}{|c|}{Full-Connected Network}                                &   \multicolumn{4}{|c|}{Convolutional Network}                                \\
        & \multicolumn{2}{|c|}{Validation Loss}& \multicolumn{2}{|c|}{Training Loss} &  \multicolumn{2}{|c|}{Validation Loss} & \multicolumn{2}{|c|}{Training Loss} \\
\hline
Levels  & $L_2$            & $L_\infty$       &   $L_2$          & $L_\infty$      &   $L_2$       & $L_\infty$        &   $L_2$       & $L_\infty$           \\
\hline
1       &  1.49e-3         &  0.677           & 2.20e-5          & 0.0544          & 6.61e-4          & 0.361          & \textbf{2.92e-5} & \textbf{0.0469}        \\
2       &  1.21e-3         &   0.434          & \textbf{4.66e-6} & \textbf{0.0165} & 6.06e-4          & 0.371          & 5.38e-5          & 0.0867       \\
2aux    &  1.35e-3         &   0.492          & 4.25e-4          & 0.107           & 6.32e-4          & 0.392          & 2.53e-4          & 0.0954        \\
3       &  2.15e-3         &   0.573          & 1.31e-3          & 0.747           & 1.05e-3          & 0.361          & 6.84e-4          & 0.686       \\
3aux    &  3.16e-3         &   0.642          & 2.28e-3          & 0.656           & 1.34e-3          & 0.512          & 1.03e-3          & 0.395       \\
4       & \textbf{1.20e-3} &   0.484          & 6.71e-6          & 0.0195          & 6.65e-4          & 0.293          & 6.69e-5          & 0.0768            \\
4aux    &  1.26e-3         &   0.345          & 4.24e-4          & 0.110           & 6.13e-4          & \textbf{0.238} & 2.56e-4          & 0.0991       \\
5       &  1.42e-3         &   0.456          & 1.16e-5          & 0.0303          & 7.28e-4          & 0.378          & 8.58e-5          & 0.101       \\
5aux    &  1.46e-3         &   \textbf{0.336} & 5.05e-4          & 0.124           & \textbf{6.03e-4} & 0.398          & 2.40e-4          & 0.0818        \\
\hline
\end{tabular}
\caption{Best $L_2$ and $L_\infty$ losses achieved for fully-connected and convolutional networks using both SGD and FAS with stability control. Level $k$aux refers to the first auxiliary network in the $k$-level hierarchy.}
    \label{tab:deep darcy losses}
\end{table}

\subsection{Experiments with Learning Rates}

\begin{figure}
\centering
\begin{tabular}{cc}
\subfloat{\includegraphics[width=.4\textwidth]{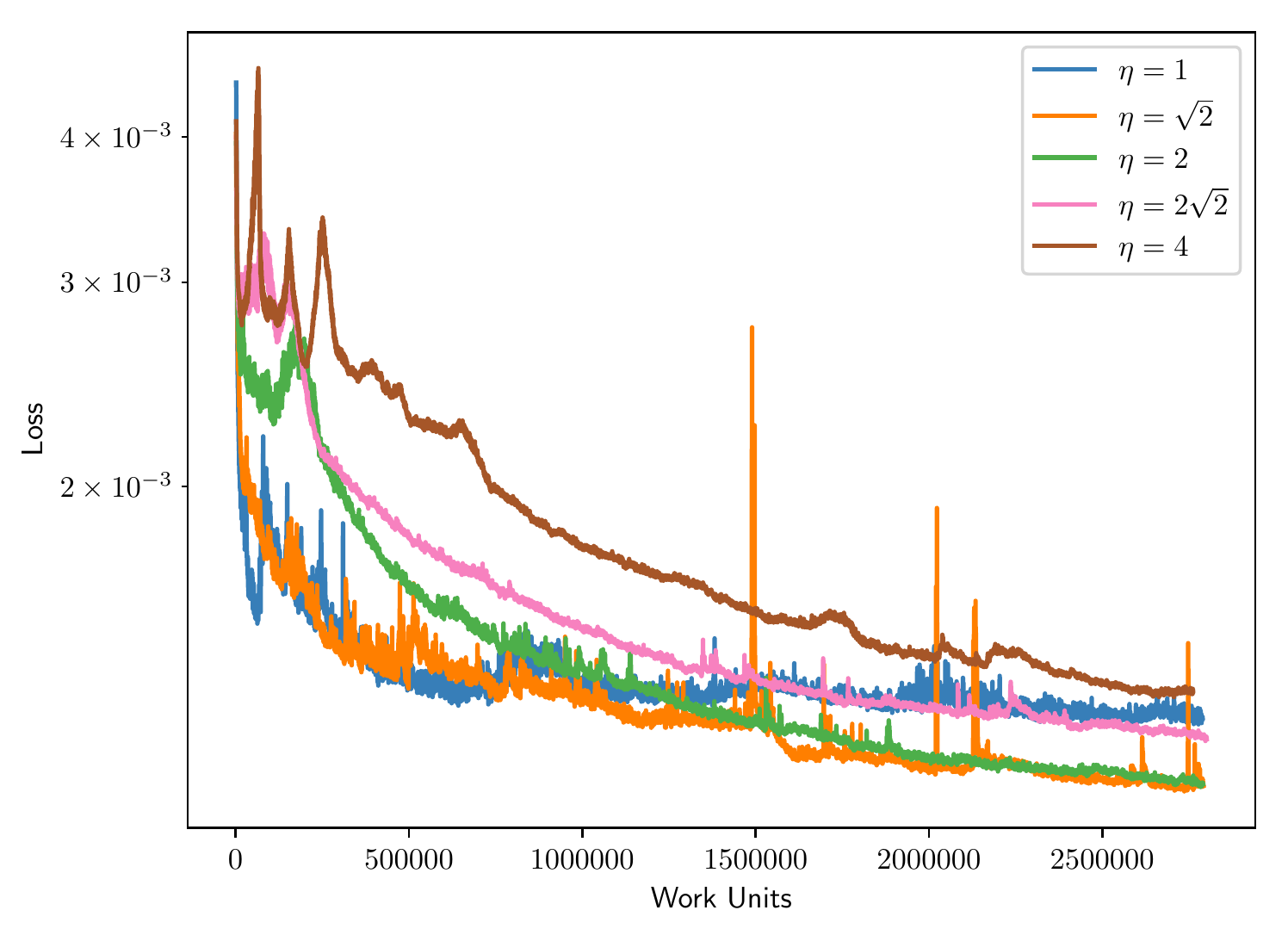}}%
&
\subfloat{\includegraphics[width=.4\textwidth]{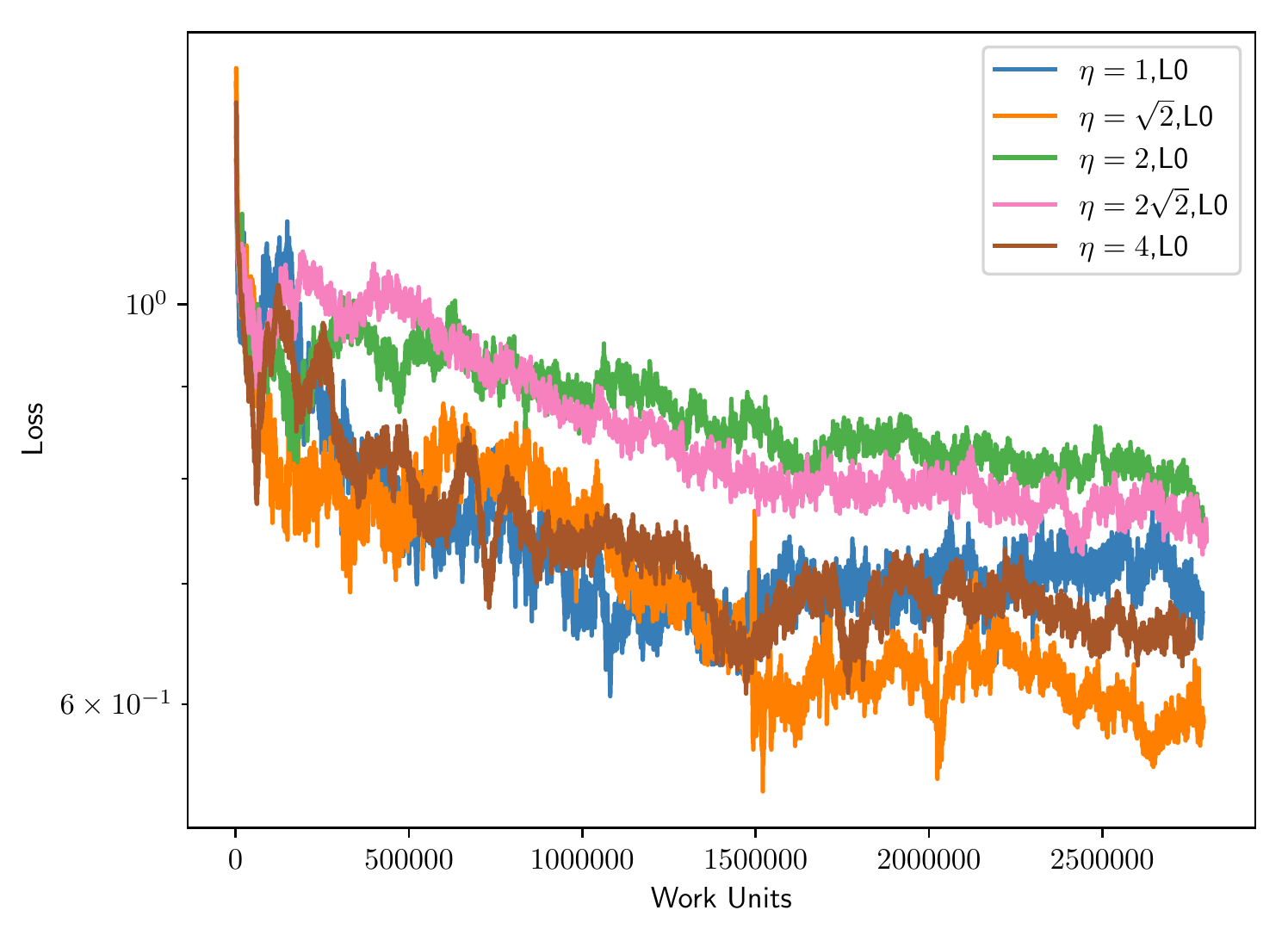}}%
\end{tabular}
\caption{$L_2$ validation loss (left) and $L_\infty$ validation loss (right) for varying hierarchical learning $\eta$ parameters. A smoothing window of size 65 was used }
\label{darcy_learnrate}
\end{figure}

There are many possible hyperparameter and architecture choices one could make, and we cannot present a full exploration of all hyperparameter choices here. We instead present experiments in which we vary how the learning rate changes with hierarchy for the 4-level hierarchy. In the previous section we divided the learning rate by a factor of $2 \sqrt{2}$ as the hierarchy deepens for the first three levels; here we also consider division constants of 1, $\sqrt{2}$, 2, and 4. These results can be see in Figure \ref{darcy_learnrate}. Best losses seen across the training runs, of both validation and training losses, can be see in Table \ref{tab:darcy learnrate losses}.

\begin{table}
\centering
\begin{tabular}{r|l|l|l|l|}
                                                  & \multicolumn{4}{|c|}{Full-Connected Network}                          \\
        & \multicolumn{2}{|c|}{Validation Loss}& \multicolumn{2}{|c|}{Training Loss}  \\
\hline
$\eta$  & $L_2$            & $L_\infty$       &   $L_2$          & $L_\infty$           \\
\hline
1                 &  1.21e-3          &   0.412          & 1.57e-5          & 0.0328             \\
1, Aux1           &  1.40e-3          &   0.347          & 3.00e-4          & 0.0871             \\
$\sqrt 2$         &  \textbf{1.08e-3} &   0.364          & 9.01e-6          & 0.0218    \\
$\sqrt 2$, Aux1   &  1.26e-3          &   \textbf{0.297} & 3.40e-4          & 0.0921                  \\
2                 &  1.09e-3          &   0.602          & 7.10e-6          & 0.0196                 \\
2, Aux1           &  1.24e-3          &   0.345          & 3.83e-4          & 0.106                 \\
$2 \sqrt 2$       &  1.20e-3          &   0.484          & \textbf{6.71e-6} & \textbf{0.0195}                 \\
$2 \sqrt 2$, Aux1 &  1.26e-3          &   0.345          & 4.24e-4          & 0.110             \\
4                 &  1.29e-3          &   0.396          & 1.21e-5          & 0.0260               \\
4, Aux1           &  1.31e-3          &   0.345          & 4.45e-4          & 0.116             \\
\hline
\end{tabular}
\caption{Best $L_2$ and $L_\infty$ losses achieved for fully-connected networks trained using 4-level FAS with varying learning rate decay $\eta$ parameters. Learning rate decays for the first 3 levels of the hierarchy, and then remains the same for the fourth.}
    \label{tab:darcy learnrate losses}
\end{table}

Like previous experiments, we see from these results that the first auxiliary network tends to produce stronger $L_\infty$ performance, which we hypothesize to be due to the reduced degrees of freedom having less opportunity for overfitting. Of these options, $\eta = \sqrt{2}$ seems to perform best: its $L_2$ loss initially drops as quickly as $\eta=1$ but eventually outperforms it, and its auxiliary network achieves the best $L_\infty$ loss.

%%%%% Conclusion %%%%%%%%%%%%%%%
\section{Conclusion}\label{conclusion}

%fill
In this paper we have presented a multilevel-in-width method for training neural networks by developing a variation of the Full Approximation Scheme as originally used in Algebraic Multigrid. We have shown that DNNs trained in this way for regression problems can achieve better generalization power by improving on multiple validation performance metrics. We also observe that, in some cases, the first auxiliary network achieves better worst-case performance than the original network, likely due to having less opportunity for overfitting.

This paper is perhaps most interesting in the context of \cite{planta2021resnets} and \cite{cyr2019multilevelinitialization}, which also showed regularization benefits to multilevel training even though all have taken quite different approaches. Future investigations should include work that seeks to more-fully understand the settings in which multilevel training confers regularization benefits, and to develop principled approaches for making specific algorithmic choices in those settings.

Our open-source PyTorch-based software, MTNN, to implement these methods, is written in a modular fashion to facilitate the study of different algorithmic choices and is available on Github at \url{https://github.com/LLNL/MTNN}. We have also provided open-source data and drivers that allow users to easily recreate our experiments. Readers are encouraged to download and use our software.

\section*{Acknowledgments}
We would like to thank Tom Benson and Ulrike Yang for their insight and suggestions throughout the research and writing of this paper.

\insertbibstyle
\bibliography{mybib}

\begin{thebibliography}{10}

\bibitem{pytorch_neurips2019_9015}
et.~al. Adam~Paszke.
\newblock Pytorch: An imperative style, high-performance deep learning library.
\newblock In H.~Wallach, H.~Larochelle, A.~Beygelzimer, F.~d'Alch\'{e} Buc,
  E.~Fox, and R.~Garnett, editors, {\em Advances in Neural Information
  Processing Systems 32}, pages 8024--8035. Curran Associates, Inc., 2019.

\bibitem{alnaes2015archive}
Martin Aln{\ae}s, Jan Blechta, Johan Hake, August Johansson, Benjamin Kehlet,
  Anders Logg, Chris Richardson, Johannes Ring, Marie~E. Rognes, and Garth~N.
  Wells.
\newblock Archive of numerical software: The {FEniCS} project version 1.5.
\newblock {\em Archive of Numerical Software}, 3(100), 2015.

\bibitem{avriel-programming}
Mordecai Avriel.
\newblock {\em Nonlinear Programming: Analysis and Methods}.
\newblock Dover Publishing, 2003.

\bibitem{brandt1977fas}
Achi Brandt.
\newblock Multi-level adaptive solutions to boundary-value problems.
\newblock {\em Mathematics of computation}, 31(138):333--390, 1977.

\bibitem{briggs-multigrid}
William~L. Briggs, Van~Emden Hensen, and Steve~F. McCormick.
\newblock {\em A Multigrid Tutorial, 2nd Ed.}
\newblock Society for Industrial and Applied Mathematics, 2000.

\bibitem{calandra2020levenbergmarquardt}
H.~Calandra, Serge Gratton, Elisa Riccietti, and Xavier Vasseur.
\newblock On a multilevel levenberg-marquardt method for the training of
  artificial neural networks and its application to the solution of partial
  differential equations.
\newblock {\em Optimization Methods and Software}, pages 1--26, 6 2020.

\bibitem{cyr2019multilevelinitialization}
Eric~C. Cyr, Stefanie Günther, and Jacob~B. Schroder.
\newblock Multilevel initialization for layer-parallel deep neural network
  training, 2019.

\bibitem{duchi-adagrad}
J.~Duchi, E.~Hazan, and Y.~Singer.
\newblock Adaptive subgradient methods for online learning and stochastic
  optimization.
\newblock {\em Journal of Machine Learning Research}, 12:2121--2159, 2011.

\bibitem{fairbanks}
Hillary~R. Fairbanks, Umberto Villa, and Panayot~S. Vassilevski.
\newblock Multilevel hierarchical decomposition of finite element white noise
  with application to multilevel {Markov} chain {Monte Carlo}.
\newblock {\em SIAM Journal on Scientific Computing}, pages S293--S316, 2020.

\bibitem{gaedke20201multilevel}
Lisa Gaedke-Merzh{\"a}user, Alena Kopani{\v{c}}{\'a}kov{\'a}, and Rolf Krause.
\newblock Multilevel minimzation for deep residual networks.
\newblock In {\em ESAIM: Proceedings and Surveys}, volume~71, pages 131--144,
  2021.

\bibitem{Goodfellow-et-al-2016}
Ian Goodfellow, Yoshua Bengio, and Aaron Courville.
\newblock {\em Deep Learning}.
\newblock MIT Press, 2016.
\newblock \url{http://www.deeplearningbook.org}.

\bibitem{gunther2020layerparallel}
Stefanie G{\"u}nther, Lars Ruthotto, Jacob Schroder, Eric Cyr, and Nicolas
  Gauger.
\newblock Layer-parallel training of deep residual neural networks.
\newblock {\em SIAM Journal on the Mathematics of Data Science}, 2(1):1--23,
  2020.

\bibitem{hackbusch2013multi}
Wolfgang Hackbusch.
\newblock {\em Multi-grid methods and applications}, volume~4.
\newblock Springer Science \& Business Media, 2013.

\bibitem{metis}
George Karypis and Vipin Kumar.
\newblock A fast and high quality multilevel scheme for partitioning irregular
  graphs.
\newblock {\em SIAM Journal on Scientific Computing}, 20(1):359--392, 1998.

\bibitem{kingma-adam}
Diederik~P. Kingma and Jimmy Ba.
\newblock Adam: {A} method for stochastic optimization.
\newblock In Yoshua Bengio and Yann LeCun, editors, {\em 3rd International
  Conference on Learning Representations, {ICLR} 2015, San Diego, CA, USA, May
  7-9, 2015, Conference Track Proceedings}, 2015.

\bibitem{kirby2020GPU}
Andrew Kirby, Siddharth Samsi, Michael Jones, Albert Reuther, Jeremy Kepner,
  and Vijay Gadepally.
\newblock Layer-parallel training with gpu concurrency of deep residual neural
  networks via nonlinear multigrid.
\newblock In {\em 2020 IEEE High Performance Extreme Computing Conference
  (HPEC)}, pages 1--7, 2020.

\bibitem{10.1145/1731022.1731030}
Anders Logg and Garth~N. Wells.
\newblock Dolfin: Automated finite element computing.
\newblock {\em ACM Trans. Math. Softw.}, 37(2), April 2010.

\bibitem{lucas-momentum}
James Lucas, Shengyang Sun, Richard Zemel, and Roger Grosse.
\newblock Aggregated momentum: Stability through passive damping.
\newblock In {\em International Conference on Learning Representations}, 2019.

\bibitem{ma-quasi-hyperbolic}
Jerry Ma and Denis Yarats.
\newblock Quasi-hyperbolic momentum and {Adam} for deep learning.
\newblock In {\em International Conference on Learning Representations}, 2019.

\bibitem{tensorflow2015-whitepaper}
et.~al. Mart\'{\i}n~Abadi.
\newblock {TensorFlow}: Large-scale machine learning on heterogeneous systems,
  2015.
\newblock Software available from tensorflow.org.

\bibitem{Nash2000discretized}
Stephen~G. Nash.
\newblock A multigrid approach to discretized optimization problems.
\newblock {\em Optimization Methods and Software}, 14(1-2):99--116, 2000.

\bibitem{naumov-matching}
M.~Naumov, M.~Arsaev, P.~Castonguay, J.~Cohen, J.~Demouth, J.~Eaton, S.~Layton,
  N.~Markovskiy, I.~Reguly, N.~Sakharnykh, V.~Sellappan, and R.~Strzodka.
\newblock Amgx: A library for gpu accelerated algebraic multigrid and
  preconditioned iterative methods.
\newblock {\em SIAM Journal on Scientific Computing}, 37(5):S602--S626, 2015.

\bibitem{aggamg}
Yvan Notay.
\newblock Aggregation-based algebraic multigrid for convection-diffusion
  equations.
\newblock {\em SIAM Journal on Scientific Computing}, 34(4):A2288--A2316, 2012.

\bibitem{ronneberger-unet}
Olaf Ronneberger, Philipp Fischer, and Thomas Brox.
\newblock U-net: Convolutional networks for biomedical image segmentation.
\newblock In Nassir Navab, Joachim Hornegger, William~M. Wells, and
  Alejandro~F. Frangi, editors, {\em Medical Image Computing and
  Computer-Assisted Intervention -- MICCAI 2015}, pages 234--241, Cham, 2015.
  Springer International Publishing.

\bibitem{sutskever-momentum}
Ilya Sutskever, James Martens, George Dahl, and Geoffrey Hinton.
\newblock On the importance of initialization and momentum in deep learning.
\newblock In {\em Proceedings of the 30th International Conference on Machine
  Learning}, page III–1139–III–1147. JMLR.org, 2013.

\bibitem{tan-second-order}
Hong~Hui Tan and King~Hann Lim.
\newblock Review of second-order optimization techniques in artificial neural
  networks backpropagation.
\newblock {\em {IOP} Conference Series: Materials Science and Engineering},
  495:012003, June 2019.

\bibitem{planta2021resnets}
Cyrill von Planta, Alena Kopanic{\'{a}}kov{\'{a}}, and Rolf Krause.
\newblock Training of deep residual networks with stochastic {MG/OPT}.
\newblock {\em CoRR}, abs/2108.04052, 2021.

\bibitem{wesseling1995introduction}
Pieter Wesseling.
\newblock Introduction to multigrid methods.
\newblock Technical report, 1995.

\end{thebibliography}

\end{document}